\def\eqref#1{equation~\ref{#1}}
\def\1{\bm{1}}
\DeclareMathAlphabet{\mathsfit}{\encodingdefault}{\sfdefault}{m}{sl}
\SetMathAlphabet{\mathsfit}{bold}{\encodingdefault}{\sfdefault}{bx}{n}
\newcommand{\tcb}{\textcolor{black}}
\newcommand{\tcr}{\textcolor{black}}
\newcommand{\etal}{\textit{et al}.~}
\newcommand{\ie}{\textit{i}.\textit{e}.~}
\newcommand{\ieno}{\textit{i}.\textit{e}.}
\newcommand{\eg}{\textit{e}.\textit{g}.~}
\theoremstyle{plain}
\newtheorem{theorem}{Theorem}[section]
\newtheorem{proposition}[theorem]{Proposition}
\newtheorem{lemma}[theorem]{Lemma}
\newtheorem{corollary}[theorem]{Corollary}
\theoremstyle{definition}
\newtheorem{definition}[theorem]{Definition}
\newtheorem{assumption}[theorem]{Assumption}
\theoremstyle{remark}
\newtheorem{remark}[theorem]{Remark}
\def\gen{\operatorname{gen}}
\def\genl{\gen^{lin}(\mathbf{J_T})}
\def\gennl{\gen^{nl}(\mathbf{J_T})}
\title{Learning Trajectories are Generalization Indicators}
\author{%
  Jingwen Fu$^{1}$\thanks{Work done during internships at Microsoft Research Asia.} , Zhizheng Zhang$^{2}$\thanks{Corresponding Authors} , Dacheng Yin$^{3}$\footnotemark[1] , Yan Lu$^{2}$ , Nanning Zheng$^{1}$\footnotemark[2] \\
  fu1371252069@stu.xjtu.edu.cn \\
  \{zhizzhang,yanlu\}@microsoft.com \\
  ydc@mail.ustc.edu.cn \\
  nnzheng@mail.xjtu.edu.cn \\
  $^{1}$National Key Laboratory of Human-Machine Hybrid Augmented Intelligence,\\ National Engineering Research Center for Visual Information and Applications,\\ and Institute of Artificial Intelligence and Robotics, Xi'an Jiaotong University, \\
  $^{2}$Microsoft Research Asia,
  $^{3}$University of Science and Technology of China \\
}
\begin{document}

\maketitle
\begin{abstract}
This paper explores the connection between learning trajectories of Deep Neural Networks (DNNs) and their generalization capabilities when optimized using (stochastic) gradient descent algorithms. 
Instead of concentrating solely on the generalization error of the DNN post-training, we present a novel perspective for analyzing generalization error by investigating the contribution of each update step to the change in generalization error. This perspective enable a more direct comprehension of how the learning trajectory influences generalization error. Building upon this analysis, we propose a new generalization bound that incorporates more extensive trajectory information.
Our proposed generalization bound depends on the complexity of learning trajectory and the ratio between the bias and diversity of training set. Experimental observations reveal that our method effectively captures the generalization error throughout the training process. Furthermore, our approach can also track changes in generalization error when adjustments are made to learning rates and label noise levels. These results demonstrate that learning trajectory information is a valuable indicator of a model's generalization capabilities.

\end{abstract}

\section{Introduction}
\def \err{\operatorname{err}}

The generalizability of a Deep Neural Network (DNN) is a crucial research topic in the field of machine learning.
Deep neural networks are commonly trained with a limited number of training samples while being tested on unseen samples. Depite the commonly used independent and identically distributed (i.i.d.) assumption between the training and testing sets, there often exists a varying degree of discrepancy between them in real-world applications.
Generalization theories study the generalization of DNNs by modeling the gap between the empirical risk  \citep{vapnik1991principles} and the popular risk  \citep{vapnik1991principles}. 
Classical uniform convergence based methods \citep{mohri2018foundations}  adopt the complexity of the function space to analyze this generalization error. 
These theories discover that more complex function space results in a larger generalization error \citep{vapnik1999nature}. However, they are not well applicable for DNNs \citep{shalev2014understanding, nagarajan2019uniform}.
In deep learning, the double descent phenomenon \citep{belkin2019reconciling} exists, which tells that larger complexity of function space may lead to smaller generalization error. This violates the aforementioned property in uniform convergence methods and imposes demands in studying the generalization of DNNs.  

Although the function space of DNNs is vast, not all functions within that space can be discovered by learning algorithms. Therefore, some representative works bound the generalization of DNNs based on the properties of the learning algorithm, \eg, stability of algorithm  \citep{hardt2016train}, information-theoretic analysis  \citep{xu2017information}. 
These works rely on the relation between the input (\ie, training data) and output (weights of the model after training) of the learning algorithm to infer the generalization ability of the learned model. 
Here, the relation refers to how the change of one sample in the training data impacts the final weights of model in the stability of algorithms while referring to the mutual information between the weights and the training data in the information-theoretic analysis. 
Although some works \citep{neu2021information,hardt2016train} leverage some information from training process to understand the properties of learning algorithm, there is limited trajectory information conveyed.

The purpose of this article is to enhance our theoretical comprehension of the relation between learning trajectory and generalization. While some recent experiments \citep{cohen2021gradient,jastrzebski2021catastrophic} have shown a strong correlation between the information contained in learning trajectory and generalization, the theoretical understanding behind this is still underexplored.
By investigating the contribution of each update step to the change in generalization error, we give a new generalization bound with rich trajectory related information. Our work can serve as a starting point to understand those experimental discoveries. 

\subsection{Our Contribution}
Our contributions can be summarized below:
\begin{itemize}
    \item We demonstrate that learning trajectory information serves as a valuable indicator of generalization abilities. With this motivation, we present a novel perspective for analyzing generalization error by investigating the contribution of each update step to the change in generalization error.
    
    \item Utilizing the aforementioned modeling technique, we introduce a novel generalization bound for deep neural networks (DNNs). Our proposed bound provides a greater depth of trajectory-related insights than existing methods.
    
    \item Our method effectively captures the generalization error throughout the training process. And the assumption corresponding to this method is also confirmed by experiments. Furthermore, our approach can also track changes in generalization error when adjustments are made to learning rates and label noise levels. 
\end{itemize}

\section{Related Work}
\paragraph{Generalization Theories}
Existing works on studying the generalization of DNNs can be divided into three categories: the methods based on the complexity of function space, the methods based on learning algorithms, and the methods based on PAC Bayes. The first category considers the generalization of DNNs from the perspective of the complexity of the function space. 
Many methods for measuring the complexity of the function space have been proposed, \eg, VC dimension  \citep{vapnik2015uniform}, Rademacher Complexity \citep{bartlett2002rademacher} and covering number \citep{shalev2014understanding}. These works fail in being applied to DNN models since the complexity of the function space of a DNN model is too large to deliver a trivial result \citep{zhang2021understanding}.
This thus motivates recent works to rethink the generalization of DNNs based on the accessible information in different learning algorithms such as stability of algorithm \citep{hardt2016train}, information-theoretic analysis \citep{xu2017information}. Among them, the stability of algorithm \citep{bousquet2002stability} measures how one sample change of training data impacts the model weights finally learned, and the information theory  \citep{russo2016controlling,russo2019much,xu2017information} based generalization bounds rely on the mutual information of the input (training data) and output (weights after training) of the learning algorithm. 
Another line is PAC Bayes \citep{mcallester1999pac} based method, which bounds the expectation of the error rates of a classifier chosen from a posterior distribution in terms of the KL divergence from a given prior distribution. 
Our research modifies the conventional Rademacher Complexity to calculate the complexity of the space explored by a learning algorithm, which in turn helps derive the generalization bound. Our approach resembles the first category, as we also rely on the complexity of the function space. However, our method differs as we focus on the function space explored by the learning trajectory, rather than the entire function space. The novelty of our technique lies in addressing the issue of dependence on training data within the function space explored by the learning trajectory, a dependency that is not permitted by the original Rademacher Complexity Theory.

\paragraph{Generalization Analysis for SGD}
\tcb{The optimization plays an nonnegligible role in the success of DNN.} Therefore, there are many prior works studying the generalization of DNNs by exploring property of SGD, which could be summarized into two categories:
stability of SGD and information-theoretic analysis.
The most popular way of the former category is to analyze the stability of the weights updating. 
\citet{hardt2016train} is the first work to analyze the stability of SGD with the requirements of smooth and Lipschitz assumptions. 
Its follow-up works try to discard the smooth \citep{bassily2020stability}, or Lipschitz  \citep{nikolakakis2022beyond} assumptions towards getting a more \tcb{general} bound. 
Information-theoretic methods leverage the chain rule of KL-divergence to calculate the mutual information between the learned model weights and the data. 
This kind of works is mainly applied for Stochastic Gradient Langevin Dynamics(SGLD), \ie, SGD with noise injected in each step of parameters updating  \citep{pensia2018generalization}. 
\citet{negrea2019information,haghifam2020sharpened} improve the results using data-dependent priors. \citet{neu2021information} construct an auxiliary iterative noisy process to adapt this method to the SGD scenario. In contrast to these studies, our approach utilizes more information related to learning trajectories. A more detailed comparison can be found in Table \ref{tab:stability} and Appendix \ref{subsec:relax}.

\section{Generalization Bound}
Let us consider a supervised learning problem with a instance space $\mathcal{Z}$ and a parameter space $\mathcal{W}$. 
The loss function can be defined as $f:\mathcal{W}\times \mathcal{Z}\rightarrow \mathbb{R}_+ $. We denote the distribution of the instance space $\mathcal{Z}$ as $\mu$. The $n$ i.i.d samples draw from $\mu$ are denoted as $S=\{z_1, ...,z_n \} \sim \mu^n$. 
Given parameters $\mathbf{w}$, 
the empirical risk and popular risk are denoted as $F_S(\mathbf{w}) \triangleq \frac{1}{n}\sum_i^n f(\mathbf{w},z_i)$, and $F_{\mu}(\mathbf{w})\triangleq \mathbb{E}_{z\sim \mu}[f(\mathbf{w},z)]$ respectively. Our work studies
the generalization error of the learned model, \ie $F_{\mu}(\mathbf{w})-F_S(\mathbf{w})$. For an optimizaiton process, the learning trajectory is represented as a function $\mathbf{J}:\mathbb{N} \to \mathcal{W}$. We use $\mathbf{J_t}$ to denote the weights of model after $t$ times \tcr{updating}, where $\mathbf{J_t}=\mathbf{J}(t)$. The learning algorithm is defined as $\mathcal{A}: \mu^n\times \mathbb{R} \to \mathbf{J}$, where the second input $\mathbb{R}$ denotes all randomness in
the algorithm $\mathcal{A}$, including the randomness in initialization, batch sampling \etal. We simply use $\mathcal{A}(S)$ to represent a  random choice for
the second input term.
Given two functions $U,V$, $\int_t U(t) \mathrm{d}V(t)\triangleq \sum_t U(t) (V(t+1)-V(t))$ and we use $\Vert \cdot \Vert$ to denote $L$2 norm. If $S$ is a set, then $|S|$ denotes the number of elements in $S$. $\mathbb{E}_t$ denotes taking the expectiation conditioned on $\{\mathbf{J_i} | i\leq t \}$.

Let mini-batch $B$ be a random subset sampled from dataset $S$, and we have $|B|=b$. The averaged function value of 
mini-batch $B$ is denoted as $F_B(\mathbf{w})\triangleq \frac{1}{b}\sum_{z\in B}f(\mathbf{w},z)$. The parameters updated with gradient descent can be formulated as:
\begin{equation}
    \mathbf{J_{t+1}}=\mathbf{J_t}-\eta_t \nabla F_S(\mathbf{J_t}).
\end{equation}
where $\eta_t$ is the learning rate for thr $t$-th update. The \tcr{parameter} updating \tcr{with} stochastic gradient descent is:
\begin{equation}
    \label{eq:SGDupdate}
    \mathbf{J_{t+1}}=\mathbf{J_t}-\eta_t \nabla F_B(\mathbf{J_t}).
\end{equation}
Let $\epsilon(\mathbf{w})\triangleq \nabla F_S(\mathbf{w})-\nabla F_B(\mathbf{w})$ be the gradient noise \tcr{in}
mini-batch updating, where $\mathbf{w}$ is the weights of \tcr{a} DNN. Then we can transform Equation (\ref{eq:SGDupdate}) into:
\begin{equation}
\label{eq:SGDupdate2}
    \mathbf{J_{t+1}}=\mathbf{J_t}-\eta_t \nabla F_S(\mathbf{J_t})+\eta_t \epsilon(\mathbf{J_t}).
\end{equation}
The covariance of the gradients over the entire
dataset $S$ can be calculated as:
\begin{equation}
    \Sigma(\mathbf{w})\triangleq \frac{1}{n} \sum_{i=1}^n \nabla f(\mathbf{w},z_i) \nabla f(\mathbf{w},z_i)^{\mathrm{T}}-\nabla F_S(\mathbf{w}) \nabla F_S(\mathbf{w})^\mathrm{T}.
\end{equation}
Therefore, the covariance of
the gradient noise $\epsilon(\mathbf{w})$ is:
\begin{equation}
\label{eq:var}
    C(\mathbf{w})\triangleq \frac{n-b}{b(n-1)}\Sigma(\mathbf{w}).
\end{equation}
Since \tcr{for any $w$ we have} $\mathbb{E}(\epsilon(\mathbf{w}))\!=\!0$,
we can represent $\epsilon(\mathbf{w})$ as $C(\mathbf{w})^{\frac{1}{2}}\epsilon'$, where $\epsilon'$ is a random distribution \tcr{whose mean is zero} and covariance matrix \tcr{is an identity matrix}. 
Here, \textbf{$\epsilon'$ can be any distributions}, including Guassian distribution  \citep{jastrzkebski2017three} and $\mathcal{S}\alpha\mathcal{S}$ distribution  \citep{simsekli2019tail}.

The primary objective of our work is to suggest a new generalization bound that incorporates more comprehensive trajectory-related information. The \textbf{key aspects} of this information are: 1) It should be adaptive and change according to different learning trajectories. 2) It should not rely on the extra information from data distribution $\mu$ except from the training data $S$. 

\subsection{Investigating generalization alone learning trajectory}
\label{sc:linear}

As annotated before, the learning trajectory is represented by a function $\mathbf{J}:\mathbb{N} \to \mathcal{W}$, which defines the relationship between the model weights and the training timesteps $t$. $\mathbf{J}_t$ denotes the model weights after $t$ times updating. Note that $\mathbf{J}$ depends on $S$, because it comes from the equation $\mathbf{J}=\mathcal{A}(S)$. We simply use $f(\mathbf{J_t}):\mathcal{Z}\rightarrow \mathbb{R}_+$ to represent the function after $t$-times update. Our goal is to analyze the generalization error, i.e., $F_{\mu}(\mathbf{\mathbf{J_T}})-F_S(\mathbf{J_T})$, where $T$ represents the total training steps. 
We reformulate the function corresponding to the finally obtained model as:
\begin{equation}
    \label{eq:exp}
    f(\mathbf{J_T})=f(\mathbf{J_0})+\sum_{t=1}^T (f(\mathbf{J_t})-f(\mathbf{J_{t-1}})).
\end{equation}

Therefore, the generalization error can be rewritten as:
\begin{equation}
\label{eq:step_decompose}
    F_{\mu}(\mathbf{\mathbf{J_T}})-F_S(\mathbf{J_T})=\underbrace{F_{\mu}(\mathbf{J_0})-F_S(\mathbf{J_0})}_{(i)}+\sum_{t=1}^T \underbrace{ [(F_{\mu}(\mathbf{J_t})-F_{\mu}(\mathbf{J_{t-1}}))-(F_S(\mathbf{J_t})-F_S(\mathbf{J_{t-1}}))]}_{(ii)_t}.
\end{equation}
In this form, we divide the generalization error into two parts. $(i)$ is the generalization error before the training. $(ii)_t$ is the generalization error caused by $t$-step update. 

Typically, there is independence between $\mathbf{J_0}$ and the data $S$. Therefore, we have $\mathbb{E}(i)=0$. Combining with this, we have:
\begin{equation}
\label{eq:step_decompose_average}
    \mathbb{E}[F_{\mu}(\mathbf{\mathbf{J_T}})-F_S(\mathbf{J_T})]=\mathbb{E}\sum_{t=1}^T \mathbb(ii)_t.
\end{equation}
Analyzing the generalization error after training can be transformed into analyzing the increase of generalization error for each update.
This is a straighforward and quite different way to extract the information from learning trajectory compared with previous work. Here, we list two techniques that most used by previous works to extract the information from learning trajectory.
\begin{itemize}
    \item (T1). This method leverages the chaining rule of mutual informaton to calculate a upper bound of the mutual information between $\mathbf{J_T}$ and the training data $S$, \ie $I(S;\mathbf{J_T})\leq I(S;\mathbf{J_{t \leq T}})\leq \sum_{t=0}^T I(S; \mathbf{J_t}|\mathbf{J_{i<t}})$. $I(S;\mathbf{J_T})$ is the value of concerning for their theory.
    \item (T2). This method assumes we have another data $S'$, which is obtained by replacing one sample in data $S$ with another sample drawing from distribution $\mu$. $\mathbf{J'}$ is the learning trajectory trained from data $S'$ with same randomness value as $\mathbf{J}$. Denote $\Delta_k \triangleq \Vert \mathbf{J_k}-\mathbf{J'_k}\Vert$ and  assume $\Delta_0=0$. Then, the value of concerning is $\Delta_T$. The upper bound of $\Delta_T$ is calculate by iterately apply the formular  $\Delta_k \leq c_{k-1}\Delta_{k-1} +e_{k-1}$. 
\end{itemize}
(T1) is commonly utilized in analyzing Stochastic Gradient Langevin Dynamics(SGLD)\cite{luo2022generalization, banerjee2022stability,pensia2018generalization}, while (T2) is frequently employed in stability-based works for analyzing SGD\cite{hardt2016train,lei2022stability,bassily2020stability}. Our method offers several benefits, including: \textbf{1) We directly focus on the change in generalization error}, rather than intermediate values such as $\Delta_k$ and $I(S; \mathbf{J_t}|\mathbf{J_{i<t}})$, \textbf{2) The generalization error is equivalent to the sum of $(ii)_t$}, while (T1) and (T2) takes the upper bound value of $I(S;\mathbf{J_T})$ and $\Delta_T$, and \textbf{3) From this perspective, We can extract more in-depth trajectory-related information.} For (T1), the computation of $I(S; \mathbf{J_t}|\mathbf{J_{i<t}})$ primarily involves the information of $\nabla F_{\mu}(\mathbf{J_t})$, which is inaccessible to us (Detail in Appendix \ref{sc:otherrel} and \citet{neu2021information}). (T2) faces the challenge that only the upper bounds of $c_{k}$ and $e_{k}$ can be calculated. The upper bounds remain unchanged across various learning trajectories. Consequently, both (T1) and (T2) have difficulty conveying meaningful trajectory information. 

\begin{table*}[]
    \centering
    \caption{\textbf{Comparison of the generalization bounds with stability based method for SGD learning algorithms.}  T.R.T is an abbreviation for the term related to trajectory. T.R.T is defined as the term that 1) varies based on different learning trajectories, and 2) don't rely on the extra information of data distribution $\mu$ except from training data $S$. We can infer that the proposed bound incorporates a greater amount of information pertaining to the trajectory. Other related works are discussed in Appendix \ref{sc:otherrel}.}
    \resizebox{\linewidth}{!}{
    \begin{tabular}{l|c|c|c|c|c|c|c}
        \hline
         Method  & $\beta$-Smooth & $L$-Lipschitz &Convex& Small LR &Other Conditions & Generalization Bound & T.R.T\\
         \hline
        \citet{hardt2016train} & \checkmark & \checkmark & \checkmark & \checkmark & & $\frac{2L^2}{n}\sum_{t=1}^T\eta_t$ & $\sum_{t=1}^T\eta_t$\\
        \citet{hardt2016train}   & \checkmark & \checkmark & &\checkmark & $f\in [0,1],\eta_t<\frac{c}{t}$& $\mathcal{O}(\frac{1}{n}L^{\frac{2}{\beta c+1}}T^{\frac{\beta c}{\beta c +1}})$ & $T^{\frac{\beta c}{\beta c +1}}$\\
        \citet{zhang2022stability}  & \checkmark & \checkmark & & \checkmark & $T>n, \eta_t=\frac{c}{\beta t}$& $\frac{16L^2T^c}{n^{1+c}}$ & $T^c$\\
        \citet{zhou2022understanding} & \checkmark & \checkmark & & \checkmark  & $\mathbb{E}_{z \in S} \Vert \nabla f(\mathbf{w},z) -\nabla F_S(\mathbf{w})\Vert^2 \leq B^2$& $\mathcal{O}(\sqrt{\frac{1}{n}L\sqrt{2\beta F_\mu(\mathbf{J_0})+\frac{1}{2}\mathbb{E}B^2}\log T})$ & $\sqrt{\log T}$ \\
        \citet{bassily2020stability}  &  &\checkmark &\checkmark & &Projected SGD & $2L^2\sqrt{\sum_{t=1}^{T-1}\eta_t^2}+\frac{4L^2}{n}\sum_{t=1}^{T-1}\eta_t$ & $\sum_{t=1}^{T-1}\eta_t$\\
        \citet{lei2020fine} & &  \checkmark &\checkmark & &Projected SGD & $\mathcal{O}((1+\frac{T}{n^2})\sum_{t=1}^T \eta_t^2)$ & $T \sum_{t=1}^T \eta_t^2$\\
         Ours (Theorem \ref{thm:mainbd})  &  & & & \checkmark &  $\Vert \nabla F_\mu(\mathbf{w}) \Vert \leq \gamma \Vert \nabla F_S(\mathbf{w}) \Vert$ & Theorem \ref{thm:mainbd} & $\int _t  d F_S(\mathbf{J_t}) \sqrt{1+\frac{\operatorname{Tr}(\Sigma(\mathbf{J_t}))}{\| \nabla F_S(\mathbf{J_t}) \|^2}}$ \\
         \hline
    \end{tabular}}

    \label{tab:rel}
\end{table*}

\subsection{A New Generalization Bound}
\label{sc:genbound}
In this section, we introduce the generalization bound based on our aforementioned modeling. 
Let us start with the definition of commonly used assumptions.
\begin{definition}
    \tcr{The} function $f$ is $L$-Lipschitz, if for all $\mathbf{w_1},\mathbf{w_2} \in \mathcal{W}$ and for all $z\in \mathcal{Z}$, \tcr{wherein} we have $\Vert f(\mathbf{w_1},z) - f(\mathbf{w_2},z)\Vert \leq L \Vert \mathbf{w_1} - \mathbf{w_2} \Vert $.
\end{definition}
\begin{definition}
    \label{df:smooth}
    \tcr{The} function $f$ is $\beta$-smooth, if for all $\mathbf{w_1},\mathbf{w_2} \in \mathcal{W}$ and for all $z\in \mathcal{Z}$, \tcr{wherein} we have $\Vert \nabla f(\mathbf{w_1},z) - \nabla f(\mathbf{w_2},z)\Vert \leq \beta \Vert \mathbf{w_1} - \mathbf{w_2} \Vert $.
\end{definition}
\begin{definition}
    \tcr{The} function $f$ is convex, if for all $\mathbf{w_1},\mathbf{w_2} \in \mathcal{W}$ and for all $z\in \mathcal{Z}$, \tcr{wherein} we have $ f(\mathbf{w_1},z) \geq  f(\mathbf{w_2},z) + (\mathbf{w_1} - \mathbf{w_2})^\mathrm{T} \nabla f(\mathbf{w_2},z) $.
\end{definition}
\tcr{Here,} $L$-lipschitz \tcb{assumption} implies that the $\Vert \nabla f(\mathbf{w},z) \Vert \leq L$ holds. $\beta$-smooth \tcb{assumption} indicates the largest eignvalue of $\nabla^2 f(\mathbf{w},z)$ is smaller than $\beta$. The convexity 
indicates the smallest eigenvalue of $\nabla^2 f(\mathbf{w},z)$ \tcr{are}
positive. These assumptions tell us the constraints of gradients and Hessian \tcr{matrices of the training data} and the unseen samples in the test set. 
\tcr{Since} the values of gradients and Hessian \tcr{matrices} in the training set \tcr{are accessible}, the key \tcr{role} of these assumptions is to deliver knowledge about the unseen samples \tcr{in the test set}. 
In the following, we introduce a new generalization bound. We give the assumption required by our new generalization bound in the following.
 \begin{assumption}
    \label{as:bdpg}
      There is a value $\gamma$, so that for all $\mathbf{w} \in \{\mathbf{J_t}| t\in \mathbb{N} \}$, we have $\Vert \nabla F_\mu(\mathbf{w}) \Vert \leq \gamma \Vert \nabla F_S(\mathbf{w}) \Vert$.
 \end{assumption}

 \begin{remark}
     Assumption \ref{as:bdpg} gives a restriction with the norm of popular gradient $\nabla F_\mu(\mathbf{w}) $. This assumption is easily satisfied when $n$ is a large number, because we have $\lim \limits_{n \rightarrow \infty}\Vert \nabla F_S(\mathbf{w}) \Vert  = \Vert \nabla F_\mu(\mathbf{w}) \Vert$. When the $n$ is not large enough, the assumption will hold before SGD enter the neighbourhood of convergent point.  Under the case that SGD enters the neighbourhood of convergent point, we give a relaxed assumption and its corresponding generalization bound in Appendix \ref{subsec:relax}. According to paper \citep{zhang2022neural}, this case will ununsually happen in real situation. Section \ref{sc:exp} gives experiments to explore the assumption.
     
 \end{remark}
 
\begin{theorem}
\label{thm:mainbd}
    Under Assumption \ref{as:bdpg}, given $S \sim \mu^{n}$, let $\mathbf{J}=\mathcal{A}(S)$, where $\mathcal{A}$ denoted the SGD or GD algorithm training with $T$ steps, we have:
\begin{equation}
    \mathbb{E}[F_\mu(\mathbf{J_T})-F_S(\mathbf{J_T})] \leq - 2 \gamma' \mathbb{V}_m \mathbb{E}\int _t \frac{d F_S(\mathbf{J_t})}{\sqrt{n}} \sqrt{1+\frac{\operatorname{Tr}(\Sigma(\mathbf{J_t}))}{\| \nabla F_S(\mathbf{J_t}) \|_2^2}}+\mathcal{O}(\eta_m)
\end{equation}
where $\mathbb{V}(\mathbf{w})=\frac{\Vert \nabla F_S(\mathbf{w})\Vert}{\mathbb{E}_{U\subset S} \Vert \frac{|U|}{n}\nabla F_U(\mathbf{w})-\frac{n-|U|}{n}\nabla F_{S/U}(\mathbf{w}) \Vert}$, $\mathbb{V}_m\!=\!\max \limits_t\mathbb{V}(\mathbf{J_t})$, $\gamma'\!=\!\max \{1, \max \limits_{U\subset S; t} \frac{|U|\Vert \nabla F_U(\mathbf{J_t}) \Vert}{n\Vert \nabla F_S(\mathbf{J_t})\Vert}\}\gamma$ and $\eta_m \triangleq \max \limits_t \eta_t$.
\end{theorem}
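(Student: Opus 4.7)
The plan is to use the per-step decomposition in Equation~(\ref{eq:step_decompose}). Since $\mathbb{E}(i)=0$, it suffices to bound each increment $(ii)_t$ in expectation. A first-order Taylor expansion of $F_\mu$ and $F_S$ around $\mathbf{J_{t-1}}$ gives
\begin{equation*}
(ii)_t = \bigl(\nabla F_\mu(\mathbf{J_{t-1}})-\nabla F_S(\mathbf{J_{t-1}})\bigr)^{\mathrm T}(\mathbf{J_t}-\mathbf{J_{t-1}}) + \mathcal{O}(\|\mathbf{J_t}-\mathbf{J_{t-1}}\|^2),
\end{equation*}
where, since $\|\mathbf{J_t}-\mathbf{J_{t-1}}\|=\mathcal{O}(\eta_{t-1})$, the quadratic remainders will sum across $t$ into the $\mathcal{O}(\eta_m)$ term in the theorem. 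Substituting the SGD update $\mathbf{J_t}-\mathbf{J_{t-1}}=-\eta_{t-1}\nabla F_B(\mathbf{J_{t-1}})$ and applying Cauchy--Schwarz yields $|(ii)_t| \leq \eta_{t-1}\,\|\nabla F_\mu-\nabla F_S\|\,\|\nabla F_B\| + \mathcal{O}(\eta_{t-1}^2)$, splitting the per-step task into controlling two norm factors.

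I would then bound each norm factor separately. Taking conditional expectation over the mini-batch and using Jensen,
\begin{equation*}
\mathbb{E}_B\|\nabla F_B(\mathbf{J_{t-1}})\| \leq \sqrt{\|\nabla F_S\|^2+\operatorname{Tr}(C(\mathbf{J_{t-1}}))} \leq \|\nabla F_S(\mathbf{J_{t-1}})\|\sqrt{1+\tfrac{\operatorname{Tr}(\Sigma(\mathbf{J_{t-1}}))}{\|\nabla F_S(\mathbf{J_{t-1}})\|^2}},
\end{equation*}
since $C=\tfrac{n-b}{b(n-1)}\Sigma\preceq\Sigma$ by Equation~(\ref{eq:var}); this produces exactly the square-root factor appearing in the theorem. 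For $\|\nabla F_\mu-\nabla F_S\|$ I would use a Rademacher symmetrization argument exploiting the identity $\mathbb{E}_{U\subset S}\|\tfrac{|U|}{n}\nabla F_U - \tfrac{n-|U|}{n}\nabla F_{S\setminus U}\| = \|\nabla F_S\|/\mathbb{V}(\mathbf{w})$. Combined with Assumption~\ref{as:bdpg}, symmetrization yields a bound of order $(\gamma'/\sqrt{n})\,\|\nabla F_S\|\cdot\mathbb{V}_m$: the $\mathbb{V}_m$ emerges from a uniform control along the (random) trajectory, while $\gamma'$ absorbs the discrepancy introduced by replacing the uniform random subsets $U$ with the actual mini-batches driving the update, which is why the explicit $\max_{U,t} |U|\|\nabla F_U\|/(n\|\nabla F_S\|)$ appears in the definition of $\gamma'$.

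Finally, multiplying the two factors and converting to $dF_S$ closes the loop. Expanding the loss along the trajectory, $F_S(\mathbf{J_t})-F_S(\mathbf{J_{t-1}}) = -\eta_{t-1}\nabla F_S^{\mathrm T}\nabla F_B + \mathcal{O}(\eta_{t-1}^2)$, so in conditional expectation the factor $\eta_{t-1}\|\nabla F_S(\mathbf{J_{t-1}})\|^2$ equals $-dF_S(\mathbf{J_{t-1}}) + \mathcal{O}(\eta_{t-1}^2)$. Summing over $t$, using the paper's convention $\int_t U(t)\,dV(t)=\sum_t U(t)(V(t{+}1)-V(t))$, and collecting all Taylor remainders into the $\mathcal{O}(\eta_m)$ term produces the stated bound. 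The main obstacle is the symmetrization step: classical Rademacher arguments require the evaluation point to be independent of $S$, whereas $\mathbf{J_{t-1}}$ is an algorithm output that depends on $S$ through all previous mini-batches. This is precisely the issue the paper flags in its Related Work, and I expect it to be resolved by conditioning on the trajectory and paying the worst-case price $\mathbb{V}_m=\max_t\mathbb{V}(\mathbf{J_t})$ along it, producing a ``trajectory-restricted'' Rademacher complexity in the spirit of the paper's stated methodological novelty.
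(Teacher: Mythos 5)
Your scaffolding matches the paper: the per-step decomposition of Equation~(\ref{eq:step_decompose}), the Taylor expansion with quadratic remainders collected into $\mathcal{O}(\eta_m)$ (the paper's Proposition~\ref{pp:gap}), and the appearance of $\sqrt{1+\operatorname{Tr}(\Sigma)/\|\nabla F_S\|^2}$ are all consistent with the actual proof. The genuine gap is the second step, which is exactly the crux and is asserted rather than derived: you claim that symmetrization plus Assumption~\ref{as:bdpg} gives $\mathbb{E}\|\nabla F_\mu(\mathbf{J_{t-1}})-\nabla F_S(\mathbf{J_{t-1}})\| \lesssim \frac{\gamma'\mathbb{V}_m}{\sqrt{n}}\|\nabla F_S(\mathbf{J_{t-1}})\|$. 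That is not what symmetrization delivers. The norm $\|\nabla F_\mu-\nabla F_S\|$ is a supremum over all directions, and even at a fixed, data-independent $\mathbf{w}$ its expectation is of order $\sqrt{\operatorname{Tr}\Sigma_\mu(\mathbf{w})/n}$; a Rademacher bound on the direction class $\{z\mapsto u^{\mathrm T}\nabla f(\mathbf{w},z):\|u\|\le 1\}$ produces $\frac{1}{\sqrt n}\sqrt{\operatorname{Tr}\Sigma+\|\nabla F_S\|^2}$, not $\frac{1}{\sqrt n}\|\nabla F_S\|$. If you plug that honest estimate into your Cauchy--Schwarz factorization, it multiplies your $\mathbb{E}_B\|\nabla F_B\|\le\|\nabla F_S\|\sqrt{1+\operatorname{Tr}\Sigma/\|\nabla F_S\|^2}$ factor and you end up with $\bigl(1+\operatorname{Tr}\Sigma/\|\nabla F_S\|^2\bigr)$ to the first power, i.e.\ the square of the factor in Theorem~\ref{thm:mainbd} --- a strictly weaker statement. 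So even setting aside the data-dependence of $\mathbf{J_{t-1}}$, the per-step Cauchy--Schwarz discards the directional structure that the theorem's rate relies on, and there is no argument in your proposal that recovers it.

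This is also why your account of where $\gamma'$ and $\mathbb{V}_m$ come from does not match their actual roles. The paper never bounds the vector norm $\|\nabla F_\mu-\nabla F_S\|$ per step: it first uses the zero conditional mean of the minibatch noise (Equation~\ref{eq:zeronoise}) to reduce $\mathbb{E}[\genl]$ to the inner-product form $\mathbb{E}\sum_t\eta_t\nabla F_S(\mathbf{J_t})^{\mathrm T}(\nabla F_S(\mathbf{J_t})-\nabla F_\mu(\mathbf{J_t}))$, interprets this as the generalization error of the trajectory-indexed proxy class $\mathcal{F}_{\mathbf{J}|S}$, symmetrizes that quantity, and then compares it term by term with $\gamma'\mathbb{V}_m R_S(\mathcal{L}_{\mathbf{J}|S})$ (Lemma~\ref{lm:mbd0}): $\mathbb{V}_m$ enters as the conversion between $\mathbb{E}_\sigma\Vert\frac{|S_+|}{n}\nabla F_{S_+}-\frac{|S_-|}{n}\nabla F_{S_-}\Vert$ and $\|\nabla F_S\|$, and $\gamma'$ enters through Assumption~\ref{as:bdpg} to control the $|S_+|\nabla F_\mu(\mathbf{J_t})$ term created by the ghost sample --- not, as you suggest, to absorb a mismatch between random subsets and the actual mini-batches (mini-batch randomness is already gone at that point). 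Only after this comparison is the Rademacher complexity of the ball class $\mathcal{L}_{\mathbf{J}|S}$ computed (Lemma~\ref{lm:calcom}), which is where $\sqrt{\operatorname{Tr}\Sigma+\|\nabla F_S\|^2}$ appears, exactly once. To repair your argument you would need to replace the per-step norm bound by this kind of whole-trajectory, inner-product-level comparison; as written, the central inequality is unsupported and its natural justification yields a looser bound than the one claimed.
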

\begin{remark}
    Our generalization bound mainly relies on the information from \tcr{gradients}.
    $\mathbb{V}(\mathbf{w})$ is related to the variance of the gradient. When the variance of the gradients across different samples in the training set $S$ is large, then the value of $\mathbb{V}(\mathbf{w})$ is small, and vice versa. Note that we have $|U|<n$ due to $U \subset S$. Our bound will became trival if $\mathbb{E}_{U\subset S} \Vert \frac{|U|}{n}\nabla F_U(\mathbf{w})-\frac{n-|U|}{n}\nabla F_{S/U}(\mathbf{w}) \Vert=0$. This rarely happens in real case, because it requires that for all $U\subset S$, we have $|U|\nabla F_U(\mathbf{w})=(n-|U|)\nabla F_{S/U}(\mathbf{w})$. A example of linear regression case is given in Appendix \ref{subsec:example}. We also give a relaxed assumption version of this theorem in Appendix \ref{subsec:relax}. \textbf{The generalization bound provides a clear insight into how the reduction of training loss leads to a increase in generalization error.}
\end{remark}

\paragraph{Proof Sketch}
The proof of this theorem is \tcr{placed} in Appendix \ref{sc:pfbound}. Here, we give the sketch for this proof.

    \paragraph{Step 1} Beginning with Equation (\ref{eq:step_decompose_average}), we decomposite the $F_{\mu}(\mathbf{\mathbf{J_T}})-F_S(\mathbf{J_T})$ into a linear part ($\genl$) and nonlinear part($\gennl$). We have $\genl=\sum_{t=1}^T (ii)^{lin}_t$, where $(ii)^{lin}_t \triangleq (\mathbf{J_t}-\mathbf{J_{t-1}})^{\mathrm{T}}(\nabla F_{\mu}(\mathbf{J_{t-1}})-\nabla F_S(\mathbf{J_{t-1}}))$. The nonlinear part is $\gennl=F_{\mu}(\mathbf{\mathbf{J_T}})-F_S(\mathbf{J_T})-\genl$. We takle these two parts differently. Here, we focus on analyzing $\genl$ because it dominates under small learning rate. Detail discussion of $\gennl$ is given in Appendix (Propositon \ref{pp:gap} and Subsection \ref{subsec:effectlaf})
    
    \paragraph{Step 2} We construct the addictive linear space $\mathcal{L}_{\mathbf{J}|S} \triangleq \{\sum_{t=0}^{T-1} \mathbf{w_t}^{\mathrm{T}}  \nabla f(\mathbf{J_t}) \  |   \   \Vert \mathbf{w_t} \Vert \leq \Delta_t \}$, where $\Delta_t \triangleq \Vert \eta_t \nabla F_S(\mathbf{J_t}) \Vert$.  
    Then $\mathbb{E}[\genl] \leq 2\gamma' \mathbb{V}_m\mathbb{E} R_S(\mathcal{L}_{\mathbf{J}|S})$, where $R_S(\mathcal{L}_{\mathbf{J}|S})  \triangleq \mathbb{E}_{\sigma}\sup\limits_{h\in \mathcal{L}_{\mathbf{J}|S}}(\frac{1}{n}\sum_{i=1}^n\sigma_i h(z_i))$.

     \paragraph{Step 3} Finally, we compute the upper bound of $R_S(\mathcal{L}_{\mathbf{J}|S})$, which follows same techniques used in Radermacher Complexity theory. By combining this with Proposition \ref{pp:gap}, we establish the theorem.

\paragraph{Technical Novety} Directly applying the Rademacher complexity to calculate the generalization error bound fails because the large complexity of neural network's function space leads to trival bound\citep{zhang2021understanding}. In this work, we want to calculate the complexity of the function space that can be explored during the training process. However, there are two challenges here. \textbf{First}, the trajectory of neural network is a "line", instead of a function space that can be calculated the complexity. To solve this problem, we indroduce the addictive linear space $\mathcal{L}_{\mathbf{J}|S}$. This space contains the local information of learning trajectory, and can serve as the pseudo function space. \textbf{Second}, the function space $\mathcal{L}_{\mathbf{J}|S}$ has a dependent on the sample set $S$, while the theory of Rademacher complexity requires that the function space is independent with training samples. To decouple this dependence, we adapt the Rademacher complexity and we obtain that $ \mathbb{E}[\genl] \leq 2\gamma' \mathbb{V}_m\mathbb{E} R_S(\mathcal{L}_{\mathbf{J}|S})$. Here, $\gamma'$ is indroduced to decouple the dependent fact mentioned above.
\paragraph{Compared with Previous Works} In Table \ref{tab:rel}, we present a summary of stability-based methods, while other methods are outlined in Appendix \ref{sc:otherrel}. We focus on generalization bounds from previous works that eliminate terms dependent on extra information about data distribution $\mu$, apart from the training data $S$, using assumptions such as smoothness or Lipschitz continuity. Analyzing Table \ref{tab:rel} reveals that most prior works primarily depend on the learning rate $\eta$ and the total number of training steps $T$. This suggests that we can achieve the same bound by using an identical learning rate schedule and total training steps, which does not align with our practical experience. Our proposed generalization bound considers the evolution of function values, gradient covariance, and gradient norms throughout the training process. As a result, our bounds encompass more comprehensive information about the learning trajectory.

\paragraph{Asymptotic Analysis} We will first analyze the dependent of $n$ for $\mathbb{V}$. The $\mathbb{V}$ is calculated as $\mathbb{V}(\mathbf{w})=\frac{\Vert \nabla F_S(\mathbf{w})\Vert}{\mathbb{E} _ {U \subset S} \Vert \frac{|U|}{n} \nabla F_U(\mathbf{w})-\frac{n-|U|}{n}\nabla F_{S/U}(\mathbf{w}) \Vert}$. Obviously, the gradient of individual sample is unrelated to the sample size $n$. And $\vert U \vert \sim n$. Therefore, $ \mathbb{V}=\mathcal{O}(1)$. Similarly, we have $\mathbb{E} \int_t \frac{d F_S(\mathbf{J_t})}{\sqrt{n}} \sqrt{1+\frac{\operatorname{Tr}(\Sigma(\mathbf{J_t}))}{\| \nabla F_S(\mathbf{J_t}) \|_2^2}}=\mathcal{O}(\frac{1}{\sqrt{n}}) $. As for the 
 $\mathcal{O}(\eta_m)$ term in Theorem 3.6, we have $\lim \limits _ {n \to \infty} \mathcal{O}(\eta_m) =0$ according to Proposition A.1. We simply assume that $ \mathcal{O}(\eta_m)=\mathcal{O}(\frac{1}{n^c})$. Therefore, our bound has $\mathcal{O}(\frac{1}{n^{\text{min}\lbrace 0.5,c \rbrace}})$

Next, in order to draw a clearer comparison with the stability-based method, we present the following corollary. This corollary employs the $\beta$-smooth assumption to bound $\gennl$ and leverages a similar learning rate setting to that found in stability based works.
\begin{corollary}
\label{col:mbd}
If function $f(\cdot)$ is $\beta$-smooth, under Assumption \ref{as:bdpg} given $S \sim \mu^{n}$, let $\mathbf{J}=\mathcal{A}(S)$, $\eta_t=\frac{c}{\beta (t+1)}$, $ M^2_2 = \max\limits_t \mathbb{E}_{t-1}(\| \nabla F_S(\mathbf{J_t})+\epsilon (\mathbf{J_t}) \|^2 )$ and $ M^4_4 = \max\limits_t \mathbb{E}_{t-1}(\| \nabla F_S(\mathbf{J_t})+\epsilon (\mathbf{J_t}) \|^4 )$ , where $\mathcal{A}$ denoted the SGD or GD algorithm training with $T$ steps, we have:
\begin{equation}
    \begin{aligned}
        \mathbb{E}[F_\mu(\mathbf{J_T})-F_S(\mathbf{J_T})] \leq & - 2 \gamma' \mathbb{V}_m \mathbb{E}\int _t \frac{d F_S(\mathbf{J_t})}{\sqrt{n}} \sqrt{1+\frac{\operatorname{Tr}(\Sigma(\mathbf{J_t}))}{\| \nabla F_S(\mathbf{J_t}) \|_2^2}}
        \\& + 2 c^2 \gamma' \mathbb{V}_m M_4^2 \sqrt{ \mathbb{E} \int_t  \frac{dt}{n \beta^2  (t+1)^4} \left(1+\frac{\operatorname{Tr}(\Sigma(\mathbf{J_t}))}{\| \nabla F_S(\mathbf{J_t}) \|_2^2}\right)}
        \\&+ 2 c^2 \frac{M_2^2}{ \beta}.
    \end{aligned}
\end{equation}
where $\mathbb{V}(\mathbf{w})=\frac{\Vert \nabla F_S(\mathbf{w})\Vert}{\mathbb{E}_{U\subset S} \Vert \frac{|U|}{n}\nabla F_U(\mathbf{w})-\frac{n-|U|}{n}\nabla F_{S/U}(\mathbf{w}) \Vert}$, $\mathbb{V}_m\!=\!\max \limits_t\mathbb{V}(\mathbf{J_t})$ and $\gamma'\!=\!\max \{1, \max \limits_{U\subset S; t} \frac{|U|\Vert \nabla F_U(\mathbf{J_t}) \Vert}{n\Vert \nabla F_S(\mathbf{J_t})\Vert}\}\gamma$. 
\end{corollary}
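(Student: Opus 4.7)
The plan is to reuse the full machinery of Theorem~\ref{thm:mainbd} to handle $\genl$, and then give an \emph{explicit} bound on $\gennl$, which was absorbed into $\mathcal{O}(\eta_m)$ in the theorem, by exploiting $\beta$-smoothness together with the prescribed schedule $\eta_t=c/(\beta(t+1))$. Starting from the decomposition $F_\mu(\mathbf{J_T})-F_S(\mathbf{J_T})=\genl+\gennl$ of Equation~(\ref{eq:step_decompose}), invoking Steps~1--3 of the proof of Theorem~\ref{thm:mainbd} produces the first term of the corollary as a bound on $\mathbb{E}[\genl]$; this step is unaffected by adding the $\beta$-smooth hypothesis.

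Next, I would write each summand of $\gennl$ as a difference of second-order Taylor residuals $R^\mu_t - R^S_t$, with $R^\bullet_t\triangleq F_\bullet(\mathbf{J_t})-F_\bullet(\mathbf{J_{t-1}})-(\mathbf{J_t}-\mathbf{J_{t-1}})^{\mathrm{T}}\nabla F_\bullet(\mathbf{J_{t-1}})$ for $\bullet\in\{\mu,S\}$. Pointwise $\beta$-smoothness of $f(\cdot,z)$ yields $|R^S_t|\leq (\beta/2)\|\mathbf{J_t}-\mathbf{J_{t-1}}\|^2$ by Definition~\ref{df:smooth}, with the same bound for $R^\mu_t$. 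Using the SGD recursion $\mathbf{J_t}-\mathbf{J_{t-1}}=-\eta_{t-1}(\nabla F_S(\mathbf{J_{t-1}})-\epsilon(\mathbf{J_{t-1}}))$, taking conditional expectation and invoking the second-moment bound $M_2^2$ gives $\mathbb{E}_{t-1}\|\mathbf{J_t}-\mathbf{J_{t-1}}\|^2\leq \eta_{t-1}^2 M_2^2$. Summing over $t$ with $\sum_{t\geq 1}1/(t+1)^2\leq 2$ produces the third term $2c^2 M_2^2/\beta$. This accounts for the ``empirical'' part of $\gennl$ (together with the crude part of the $\mu$ residual).

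For the middle term, rather than bounding $R^\mu_t - R^S_t$ by the triangle inequality alone, I would apply a refined Rademacher-complexity argument analogous to Steps~2--3 of Theorem~\ref{thm:mainbd}, now to a quadratic (second-order) addictive space spanned by the sample-wise Taylor residual functions $z\mapsto f(\mathbf{J_t},z)-f(\mathbf{J_{t-1}},z)-(\mathbf{J_t}-\mathbf{J_{t-1}})^{\mathrm{T}}\nabla f(\mathbf{J_{t-1}},z)$. The sample-to-population gap is then controlled by the same bias/variance factor $\gamma'\mathbb{V}_m$ as in the linear analysis. Computing the adapted Rademacher complexity of this quadratic space and applying Cauchy--Schwarz with $M_4^4$ absorbing the fourth moment of $\nabla F_S+\epsilon$ and $\eta_t^2=c^2/(\beta^2(t+1)^2)$ appearing squared inside the square root yields exactly the factor $\sqrt{\mathbb{E}\int_t dt/(n\beta^2(t+1)^4)\cdot(1+\mathrm{Tr}(\Sigma(\mathbf{J_t}))/\|\nabla F_S(\mathbf{J_t})\|^2)}$. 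Combining the three contributions completes the proof.

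The main technical obstacle is the quadratic-space Rademacher step. Adapting the linear construction $\mathcal{L}_{\mathbf{J}|S}$ to second-order terms while still extracting the $\gamma'\mathbb{V}_m$ factor and matching the exact Cauchy--Schwarz form that uses $M_4^2$ (and not merely $M_2^2$) against the $(t+1)^{-4}$ weight requires care: the dependence of the residual functions on $\mathbf{J_t}$ (which depends on $S$) must be decoupled in the same way $\gamma'$ decouples it in Step~2 of the original proof. The other two contributions (linear-part and $R^S_t$ bound) are routine given Theorem~\ref{thm:mainbd} and $\beta$-smoothness.
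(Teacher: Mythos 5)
Your plan correctly reproduces two of the three terms: the first term via Theorem~\ref{thm:mainbd}'s Rademacher machinery for $\genl$, and the third term $2c^2M_2^2/\beta$ by bounding both Taylor residuals with $\beta$-smoothness, the update rule, and $\sum_t (t+1)^{-2}\le 2$ — this is exactly the paper's term $(B)$. The gap is in your middle term. In the paper, the $M_4^2$ term has nothing to do with a ``quadratic'' Rademacher complexity of the second-order residual functions. It is the price of a \emph{bookkeeping conversion inside the linear part}: Lemma~\ref{lm:calcom} naturally bounds $R_S(\mathcal{L}_{\mathbf{J}|S})$ in terms of $\delta_t\sqrt{\operatorname{Tr}\Sigma+\|\nabla F_S\|^2}$ with $\delta_t=\|\eta_t\nabla F_S(\mathbf{J_t})\|$, i.e.\ in terms of the \emph{linearized} decrement $dF^{lin}_S(\mathbf{J_t})$, whereas the corollary's first term is written with the \emph{actual} decrement $dF_S(\mathbf{J_t})$. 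In Theorem~\ref{thm:mainbd} this discrepancy is hidden in the $\mathcal{O}(\eta_m)$ term; the corollary has no such term, so the difference $dF^{lin}_S-dF_S$ must be bounded explicitly. The paper does this (its term $(A)$) using $|dF^{lin}_S-dF_S|\le\beta\|\mathbf{J_{t+1}}-\mathbf{J_t}\|^2$, the schedule $\eta_t=c/(\beta(t+1))$, and H\"older's inequality on the conditional expectation to split $\|\nabla F_S+\epsilon\|^2$ from the $\sqrt{1+\operatorname{Tr}\Sigma/\|\nabla F_S\|^2}$ factor, which is where $M_4^4$ and the $(t+1)^{-4}$ weight come from — no new complexity computation is needed.

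Concretely, your accounting does not close: you assert that Steps~1--3 of the theorem deliver the first term of the corollary exactly, so the $\delta_t$-versus-$dF_S$ conversion error is never bounded anywhere in your argument; meanwhile your third term already absorbs \emph{all} of $\gennl$ (both the $\mu$ and $S$ residuals via the triangle inequality), so the quantity you propose to control with the quadratic-space Rademacher argument has already been spent, and that argument — which you yourself flag as the main obstacle, and for which no construction analogous to $\mathcal{L}_{\mathbf{J}|S}$ is given — is not needed for the stated bound and would not obviously produce the $M_4^2$, $(t+1)^{-4}$ form. To repair the proposal, drop the quadratic-space step, keep the Rademacher bound in its native $\delta_t$ form, and bound the replacement error $dF^{lin}_S-dF_S$ with $\beta$-smoothness, the SGD recursion, and H\"older's inequality exactly as you did for the third term but retaining the $\sqrt{1+\operatorname{Tr}\Sigma/\|\nabla F_S\|^2}$ weight; that yields the middle term.
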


\subsection{Analysis}

\subsubsection{Generalization Bounds}

Our obtained generalization bound is:
\begin{equation}
    \mathbb{E}[F_\mu(\mathbf{J_T})-F_S(\mathbf{J_T})] \leq \!  \underbrace{\gamma'}_{\text{\tiny Bias of Training Set}}\!  \overbrace{\mathbb{V}_m}^{\frac{1}{\text{Diversity of Training Set}}} \! \underbrace{\! \left(\!- 2\mathbb{E}\!\int _t \frac{d F_S(\mathbf{J_t})}{\sqrt{n}} \sqrt{1+\frac{\operatorname{Tr}(\Sigma(\mathbf{J_t}))}{\| \nabla F_S(\mathbf{J_t}) \|_2^2}} \!\right)\!}_{\text{Complexity of Learning Trajectory}}+\mathcal{O}(\!\eta_m\!)
\end{equation}
The "Bias of Training Set" refers to the disparity between the characteristics of the training set and those of the broader population. To measure this difference, we use the distance between the norm of the popular gradient and that of the training set gradient, as specified in Assumption \ref{as:bdpg}. The "Diversity of Training Set" can be understood as the variation among the samples in the training set, which in turn affects the quality of the training data. The ratio $\frac{\text{Bias of Training Set}}{\text{Diversity of Training Set}}$ gives us the property of information conveyed by the training set. It is important to consider the properties of the training set, as the data may not contribute equally to the generalization\citep{sorscher2022beyond}. \textbf{The detail version of the equation can be found in Theorem \ref{thm:mainbd}.}

\subsubsection{Comparison with Uniform Stability Results} 

Here, we compare our modelling with uniform stability \cite{hardt2016train} from several perspectives in Table \ref{tab:stability}.

\begin{table}[h]
    \centering
    \caption{\textbf{Comparison with Uniform Stability Methods.} The $\beta$ refers to the $\beta$-smooth assumption (see in Definition \ref{df:smooth}). $S$ denotes the training set.}
    \resizebox{\linewidth}{!}{
    \begin{tabular}{c|c|c}
         \hline
         & Uniform Stability\citep{hardt2016train} & Ours \\
         \hline 
        Assumption & $\forall \mathbf{w} \in \mathcal{W} \quad \forall z'\in \mathcal{Z} \quad \| \nabla f(\mathbf{w},z') \| \leq L $  & $\forall \mathbf{w} \in \{\mathbf{J_t}| t\in \mathbb{N} \} \quad \| \mathbb{E}_{z' \sim \mu} \nabla f(\mathbf{w},z') \| \leq \gamma \| \mathbb{E}_{z \in S} \nabla f(\mathbf{w},z') \|$ \\
       Modelling Method of SGD & Epoch Structure & Full Batch Gradient + Stochastic Noise \\
       Batch Size & 1 &  $\leq n$\\
       Trajectory Information in Bound & Learning rate and number of training step &  Values in Trajectory (gradient norm and covariance)\\
       Perspective &Stability of Algorithm & Complexity of Learning Trajectory  \\
       \hline
       
    \end{tabular}}
    
    \label{tab:stability}
\end{table}
The concept of uniform stability is commonly used to evaluate the ability of SGD in generalizaton, by assessing its stability when a single training sample is altered. Our primary point of comparison is with \citet{hardt2016train}, as their work is considered the most representative in terms of analyzing the stability of SGD.

\textbf{First}, the assumption of Uniform Stability requires the gradient norm of all input samples for all weights being bounded by $L$, whereas our assumption only limits the expectation of the gradients for the weights during the learning trajectory. 
\textbf{Secondly}, Uniform Stability uses an epoch structure to model the stochastic gradient descent, whereas our approach regards each stochastic gradient descent as full batch gradient descent with added stochastic noise. The epoch structure complicates the modelling process because it requires a consideration of sampling. As a result, in \citet{hardt2016train}, the author only considers the setting with batch size 1.
\textbf{Thirdly}, the bound of Uniform Stability only uses hyperparameters setting such as learning rate and number of training step. In contrast, our bound contains more trajectory-related information, such as the gradient norm and covariance. 
\textbf{Finally}, the Uniform Stability provides the generalization bound based on the stability of the algorithm, while our approach leverages the complexity of the learning trajectory. \textbf{In summary}, there are some notable differences between our approach and Uniform Stability, such as the assumptions made, the modelling process, the type of information used in the bound, and the perspectives.


\def \cj{\mathcal{C}(\mathbf{J_t})} 
\def \df{\nabla F_{S'}(\mathbf{J_t})-\nabla F_S(\mathbf{J_t})}
\def \tgam{\widetilde{\gamma}_t}

\section{Experiments}
\label{sc:exp}
\subsection{Tightness of Our Bounds}
\begin{table}[h]
     \centering
     \caption{\textbf{Numeric comparison with stability-based work on toy examples.} The reason for the value of \citet{zhang2022stability} is large is because that our and \citet{hardt2016train} has dependent on $\frac{L^2}{\beta}$, while \citet{hardt2016train} depends on $L^2$. $L$ and $\beta$ are usually large numbers.}
     \begin{tabular}{c|c|c|c}
     \hline
         Gen Error & Ours  & \citet{hardt2016train} & \citet{zhang2022stability} \\
    \hline
          1.49 & 3.62 & 4.04 & 4417
    \\ \hline
     \end{tabular}
     \label{tab:num_rst}
\end{table}
In a toy dataset setting, we compare our generalization bound with stability-based methods. \paragraph{Reasons for toy examples} \textbf{1) Some values in the bounds are hard to be calculated.}  Calculating $\beta$ (under the $\beta$-smooth assumption) and $L$ (under the $L$-Lipschitz assumption) in stability-based work, as well as the values of $\mathbb{V}$ and $\gamma$ in our proposed bound, are challenging. \textbf{2) Stability-based methods require a batch size of 1.} The training is hard for batch size of 1 with learning rate setting $\eta_t=\frac{1}{t}$ in complex datasets.
\paragraph{Constuction of the toy examples} In the following, we discuss the construction of the toy dataset used to compare the tightness of the generalization bounds. The training data is $X_{tr}=\lbrace x_i \rbrace_{i=1}^n$. All the data $x_i$ is sampled from Guassian distribution $\mathcal{N}(0,\mathbf{I}_d)$. Sampling $\tilde{\mathbf{w}} \sim \mathcal{N}(0,\mathbf{I}_d)$,the ground truth is generated by $y_i=1 \ \ \text{if} \ \ \tilde{\mathbf{w}}^{\mathrm{T}} x_i>0 \ \ \text{else} \ \ 0$. The weights for learning is denoted as $\mathbf{w}$. The predict $\tilde{y}$ is calculated as $\tilde{y}_i =\mathbf{w}^{\mathrm{T}}x_i $. The loss for a simple data point is $l_i=\left\Vert y_i- \mathbf{w}^{\mathrm{T}}x_i \right\Vert_2$. The training loss is $\mathcal{L}=\sum_{i=1}^n l_i$. The test data is $X_{te}=\lbrace x'_i \rbrace$, where $x'_i= \tilde{x}'_i$ and $\tilde{x}'_i \sim \mathcal{N}(0,\mathbf{I}_d)$. We use 100 samples for training and 1,000 samples for evaluation. The model is trained using SGD for 200 epochs.

We evaluate the tightness of our bound by comparing our results with those in references \citet{hardt2016train} and \citet{zhang2022stability} from the original paper. We set the learning rate as $\eta_t=\frac{1}{\beta t}$. \textbf{Our reasons for comparing with these two papers are}: 1) \citet{hardt2016train} is a representative study, 2) Both papers have theorems using a learning rate setting of $\eta_t=\mathcal{O}(\frac{1}{t})$, which aligns with Corollary 3.8 in our paper, and 3) They do not assume convexity. The generalization bounds we compare include Corollary 3.8 from our paper, Theorem 3.12 from \citet{hardt2016train}, and Theorem 5 from \citet{zhang2022stability}.


Our results are given in Table \ref{tab:num_rst}. Our bound is tighter under this setting.  
\subsection{Capturing the trend of generalization error}

In this section, 1) we conduct the deep learning experiment to verify Assumption \ref{as:bdpg} and 2) Verify whether our proposed generalization bound can capture the changes of generalization error. In this experiment, we mainly consider the term $\mathcal{C}(\mathbf{J_t}) \triangleq -2\int _{i=0}^t \frac{d F_S(\mathbf{J_i})}{\sqrt{n}} \sqrt{1+\frac{\operatorname{Tr}(\Sigma(\mathbf{J_i}))}{\| \nabla F_S(\mathbf{J_i}) \|_2^2}}$. We omit the term $\gamma'$ and $\mathbb{V}_m$, because all the trajectory related information that we want to explore is stored in $\mathcal{C}(\mathbf{J_t})$. 
Capturing the trend of generalization error is regarded as an important problem in \citet{nagarajan2021explaining}. 
Unless further specified, we use the default setting of the experiments on CIFAR-10 dataset  \citep{krizhevsky2009learning} with the VGG13  \citep{simonyan2014very} network. The experimental details for each figure can be found in Appendix \ref{subsec:expdetail}.

Our observations are:

\begin{itemize}
    \item Assumption \ref{as:bdpg} is valid when SGD is not exhibiting extreme overfitting.
    \item The term of $\mathcal{C}(\mathbf{J_t})$ can depict how the generalization error varies along the training process. And it can also track the changes in generalization error when adjustments are made to learnling rates and label noise levels
\end{itemize}

\begin{figure}
    \centering
    \includegraphics[width=0.9 \textwidth]{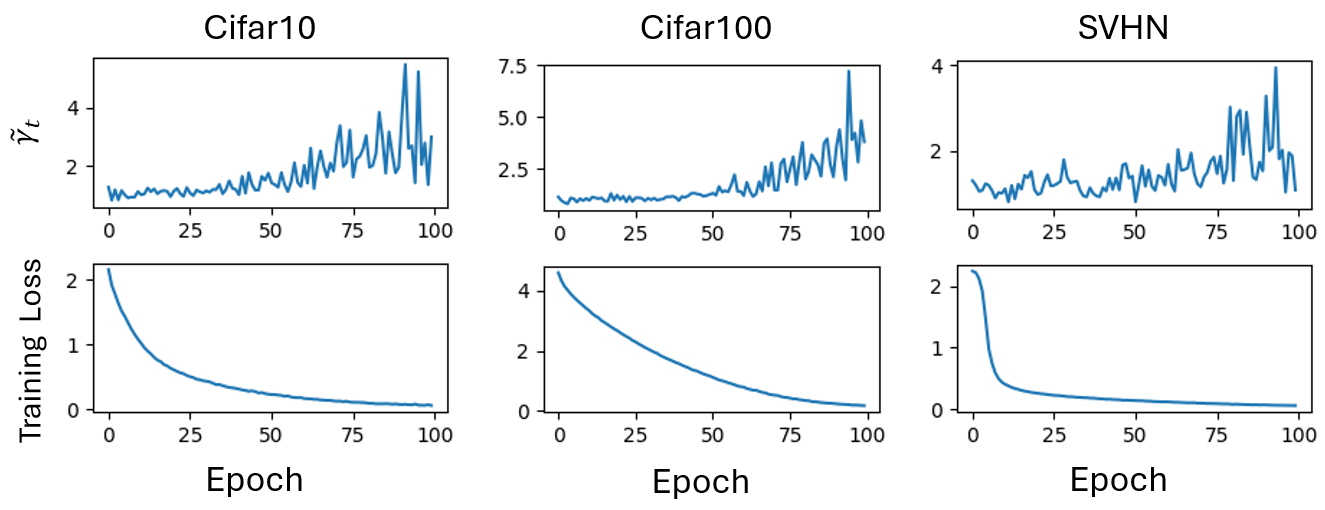}
    \caption{\textbf{Exploration of Assumption \ref{as:bdpg} for different dataset.} The $\tgam$ is stable before training loss reaches a relative small value. Assumption holds if the training is stop before extremely overfitting. A relaxed assumption and its corresponding generalization bound are given in Appendix \ref{subsec:relax} for extremely overfitting situation}
    \label{fig:ass}
\end{figure}

\paragraph{Exploring the assumption \ref{as:bdpg} for different dataset during the training process}

To explore the Assumption \ref{as:bdpg}, we define $\gamma_t \triangleq \frac{\| \nabla F_\mu(\mathbf{J_t})\|}{\| \nabla F_S(\mathbf{J_t}) \|}$ and $\widetilde{\gamma}_t \triangleq \frac{\| \nabla F_{S'}(\mathbf{J_t})\|}{\| \nabla F_S(\mathbf{J_t}) \|}$, where $S'$ is another data set i.i.d sampled from distribution $\mu$. Because $S'$ is independent with $S$, we have $\tgam\approx \gamma_t$. We found that $\tgam$ is stable around 1 during the early stage of training(Figure \ref{fig:ass}). When the training loss is reaching a relative small value, $\tgam$ increases as we continue training. This phenomenon remain consistant aross the Cifar10, Cifar100 and SVHN datasets. The $\gamma$ in Assumption \ref{as:bdpg} can be assigned as $\gamma=\max_t \tgam$. We can always find such $\gamma$ if the optimizer is not extreme overfitting. Under the extremely overfitting case, we can use the relaxed theorem in Appendix \ref{subsec:relax} to bound the generalization error.

\paragraph{The bound capturing the trend of generalization error during training process}

\begin{figure}
    \centering
    \includegraphics[width=0.9 \textwidth]{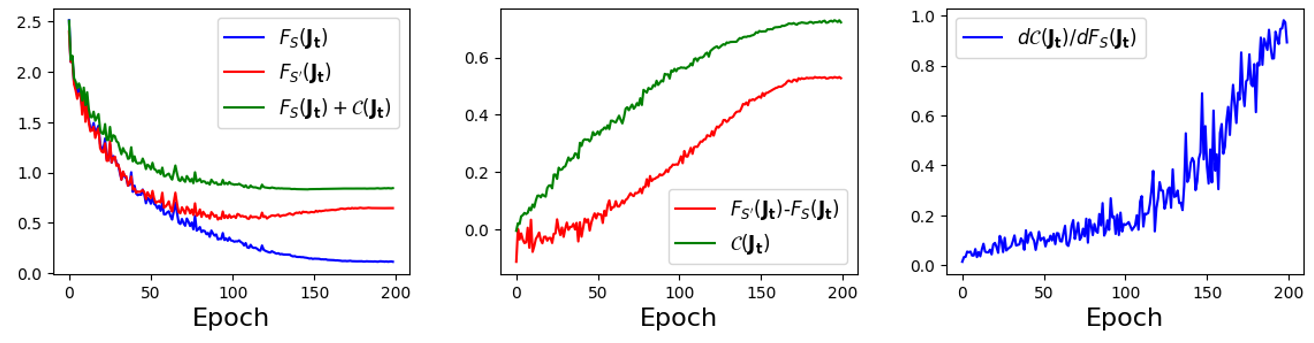}
    \caption{\textbf{Exploration of $\cj$ during the training process.} \textit{Left:} The curve $F_{S}(\mathbf{J_t})+\mathcal{C}(\mathbf{J_t})$ exhibits a comparable Pattern with the curve $F_{S'}(\mathbf{J_t})$. \textit{Center:} After the early stage, $\cj$ and $\df$ have a similar trend. \textit{Right:} The value of $\frac{d\cj}{d F_S(\mathbf{J_t})}$ alone the training process.}
    \label{fig:training}
\end{figure}


The generalization error and the $\cj$ both changes as the training continues. Therefore, we want to verify whether they correlate with each other during the training process. Here, we use the term $\df$ to approximate the generalization error. We find that $\df$ has similar trend with $\cj$ (Figure \ref{fig:training} \textit{Center}). What's more, we also find that the curve of $\nabla F_S(\mathbf{J_t})+\mathcal{C}(\mathbf{J_t})$ exhibits a comparable pattern with the curve $F_{S'}(\mathbf{J_t})$ (Figure \ref{fig:training} \textit{Left}). To explore whether $\cj$ reveals influence of the change of $ F_S(\mathbf{J_t})$ to the generalization error, we plot $\frac{d \cj}{d F_S(\mathbf{J_t})}$ (Figure \ref{fig:training} \textit{Right}) during the training process. $\frac{d \cj}{d F_S(\mathbf{J_t})}$ increases slowly during the early stage of training, but surge rapidly afterward. This discovery is aligned with our intuition about the overfitting.

\begin{figure}
    \centering
    \includegraphics[width=0.7 \textwidth]{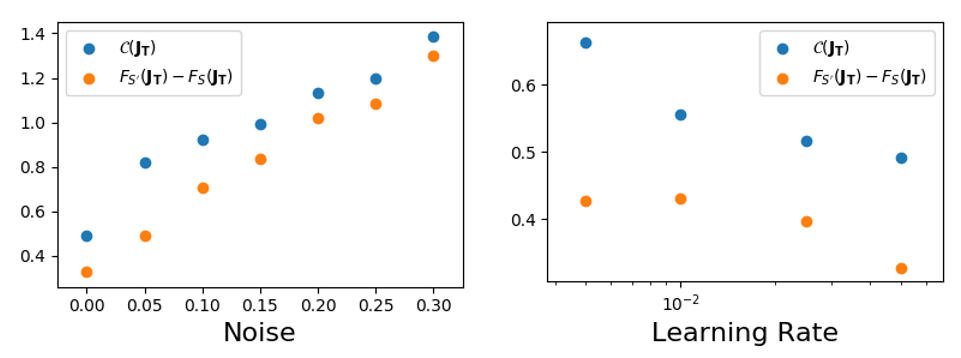}
    \caption{\textbf{$\mathcal{C(\mathbf{J_T})}$ correlates with $\df$}. \textit{Left:} $\mathcal{C}(\mathbf{J_t})$ and the generalization error under different label noise level. \textit{Right:} $\mathcal{C}(\mathbf{J_t})$ and the generalization error under learning rate. The $\mathcal{C}(\mathbf{J_t})$ can capture the trend of generalization error cased by learning rate when learning rate is small. Appendix \ref{subsec:lr} provides proof that a large learning rate results in a smaller proposed generalization bound. Further discussions on why a small learning rate leads to a larger generalization error can be found in \citet{li2019towards,barrett2020implicit}.}
    \label{fig:nlr}
\end{figure}

\paragraph{The complexity of learning trajectory correlates with the generalization error}  In Figure \ref{fig:nlr}, we carry out experiments under various settings. Each data point in the figure represents the average of three repeated experiments. The results demonstrate that both the generalization error and $\cj$ increase as the level of label noise is raised (Figure \ref{fig:nlr} \textit{Left}). The another experiments measure $\cj$ and generalization error for different learning rate and discover that $\cj$ can capture the trend generalization error. The reasons behind a larger learning rate resulting in a smaller generalization error have been explored in \citet{li2019towards,barrett2020implicit}. Additionally, Appendix \ref{subsec:lr} discusses why a larger learning rate can lead to a smaller $\cj$.

\section{Limitation}
The assumption of small learning rate is required by our method. But this assumption is also common use in previous works. For example,  \citet{hardt2016train, zhang2022stability,zhou2022understanding} explicitly requires that the learning rate should be small and is decayed with a rate of $\mathcal{O}(\frac{1}{t})$. Some methods have no explict requirements about this but show that large learning rate pushes the generalization bounds to a trivial point. For example, the generalization bounds in works  \citep{bassily2020stability, lei2020fine} have a term $\sum_{t=1}^T \eta_t^2$ that \tcr{is} not decayed as the data size $n$ increases. The value of this term is unignorable when the learning rate is large. 
The small learning assumption widens the gap between theory and practice. Eliminating this assumption is crucial for future work.

\section{Conclusion}
In this study, we investigate the relation between learning trajectories and generalization capabilities of Deep Neural Networks (DNNs) from a unique standpoint. We show that learning trajectories can serve as reliable predictors for DNNs' generalization performance. To understand the relation between learning trajectory and generalization error, we analyze how each update step impacts the generalization error. Based on this, we propose a novel generalization bound that encompasses extensive information related to the learning trajectory. The conducted experiments validate our newly proposed assumption. Experimental findings reveal that our method effectively captures the generalization error throughout the training process. Furthermore, our approach can also track changes in generalization error when adjustments are made to learning rates and the level of label noises.

\section{Acknowledgement}
We thank all the anonymous reviewers for their valuable comments. The work was supported in part
with the National Natural Science Foundation of China (Grant No. 62088102).

\bibliography{main}
\bibliographystyle{abbrvnat}

\newpage
\appendix


\section{Proof of Theorem \ref{thm:mainbd}}
\label{sc:pfbound}

We rewrite the Equation (\ref{eq:step_decompose}) and Equation (\ref{eq:step_decompose_average}):

\begin{equation}
\label{eq:step_decompose_ap}
    F_{\mu}(\mathbf{\mathbf{J_T}})-F_S(\mathbf{J_T})=\underbrace{F_{\mu}(\mathbf{J_0})-F_S(\mathbf{J_0})}_{(i)}+\sum_{t=1}^T \underbrace{ [(F_{\mu}(\mathbf{J_t})-F_{\mu}(\mathbf{J_{t-1}}))-(F_S(\mathbf{J_t})-F_S(\mathbf{J_{t-1}}))]}_{(ii)_t},
\end{equation}

and

\begin{equation}
\label{eq:step_decompose_average_ap}
    \mathbb{E}[F_{\mu}(\mathbf{\mathbf{J_T}})-F_S(\mathbf{J_T})]=\mathbb{E}\sum_{t=1}^T (ii)_t.
\end{equation}
Using Taylor expansion \tcr{for the function $f(\cdot)$}, we have:
\begin{equation}
\label{eq:linearexpansion}
        f(\mathbf{J_t})-f(\mathbf{J_{t-1}})=(\mathbf{J_t}-\mathbf{J_{t-1}})^{\mathrm{T}}\nabla f(\mathbf{J_{t-1}})   + \mathcal{O}(\Vert \mathbf{J_{t+1}}-\mathbf{J_t} \Vert^2).
\end{equation}

Therefore, we can define $(ii)^{lin}_t$ as:
\begin{equation}
    (ii)^{lin}_t \triangleq (\mathbf{J_t}-\mathbf{J_{t-1}})^{\mathrm{T}}(\nabla F_{\mu}(\mathbf{J_{t-1}})-\nabla F_S(\mathbf{J_{t-1}})).
\end{equation}
The $(ii)_t$ can be decomposed  as $(ii)_t=(ii)_t^{lin}+(ii)_t^{nl}$, where $(ii)_t^{nl}\triangleq (ii)_t-(ii)_t^{lin}$.

Then Equation \ref{eq:step_decompose_average_ap} can be decomposited as:
\begin{equation}
\label{eq:step_linear_average_ap}
    \mathbb{E}[F_{\mu}(\mathbf{\mathbf{J_T}})-F_S(\mathbf{J_T})]=\mathbb{E}\underbrace{\sum_{t=1}^T (ii)^{lin}_t}_{\gen^{lin}(\mathbf{J_T})}+\mathbb{E}\underbrace{\sum_{t=1}^T (ii)^{nl}_t}_{\gen^{nl}(\mathbf{J_T})}.
\end{equation}

\begin{proposition}
\label{pp:gap}
For the gradient descent or the stochastic gradient descent algorithm, we have:
\begin{equation}
    \label{eq:gengap}
        \mathbb{E}[\genl]=  \mathbb{E}[\sum_{t=0}^{T-1} \eta_t \nabla F_S(\mathbf{J_t})^{\mathrm{T}} (\nabla F_S(\mathbf{J_t})- \nabla F_\mu(\mathbf{J_t}))].
\end{equation}

If $T=\mathcal{O}(\frac{1}{\eta_m})$, we have:
\begin{equation}
    \vert \gennl \vert=\mathcal{O}(\eta_m),
\end{equation}
where $\eta_m \triangleq \max \limits_t \eta_t$, and \tcr{we have:}
\begin{equation}
    \lim \limits_{n\rightarrow \infty} \vert \gennl \vert =0.
\end{equation}

\end{proposition}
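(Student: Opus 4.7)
My plan is to attack the three claims in sequence, reusing the update rule $\mathbf{J_{t+1}} = \mathbf{J_t} - \eta_t \nabla F_S(\mathbf{J_t}) + \eta_t \epsilon(\mathbf{J_t})$ (with $\epsilon \equiv 0$ in the full-batch case). For the first identity, I would substitute this increment directly into the definition of $(ii)^{lin}_{t+1} = (\mathbf{J_{t+1}} - \mathbf{J_t})^{\mathrm{T}}(\nabla F_{\mu}(\mathbf{J_t}) - \nabla F_S(\mathbf{J_t}))$, splitting it as
\begin{equation*}
(ii)^{lin}_{t+1} = -\eta_t \nabla F_S(\mathbf{J_t})^{\mathrm{T}}(\nabla F_{\mu}(\mathbf{J_t}) - \nabla F_S(\mathbf{J_t})) + \eta_t \epsilon(\mathbf{J_t})^{\mathrm{T}}(\nabla F_{\mu}(\mathbf{J_t}) - \nabla F_S(\mathbf{J_t})).
\end{equation*}
Applying $\mathbb{E}_t[\,\cdot\,]$ kills the noise term since $\mathbb{E}_t[\epsilon(\mathbf{J_t})] = 0$ by definition of mini-batch gradient noise and $\nabla F_\mu(\mathbf{J_t}) - \nabla F_S(\mathbf{J_t})$ is measurable with respect to the conditioning $\sigma$-algebra. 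Summing over $t$ and taking total expectation (together with the tower property) yields equation (18).

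For the second claim, I would return to the Taylor expansion (equation 16) applied separately to $F_\mu$ and $F_S$: write $F_\mu(\mathbf{J_t}) - F_\mu(\mathbf{J_{t-1}}) = (\mathbf{J_t} - \mathbf{J_{t-1}})^{\mathrm{T}} \nabla F_\mu(\mathbf{J_{t-1}}) + R_t^\mu$ and similarly $R_t^S$ for $F_S$, where each remainder satisfies $|R_t^\cdot| \le \tfrac{C}{2}\|\mathbf{J_t} - \mathbf{J_{t-1}}\|^2$ under the implicit smoothness condition used in (16). Then $(ii)^{nl}_t = R_t^\mu - R_t^S$, so
\begin{equation*}
|(ii)^{nl}_t| \le C \|\mathbf{J_t} - \mathbf{J_{t-1}}\|^2 = C \eta_{t-1}^2 \|\nabla F_S(\mathbf{J_{t-1}}) - \epsilon(\mathbf{J_{t-1}})\|^2 \le C \eta_m^2 G^2,
\end{equation*}
where $G$ bounds the (expected) norm of the stochastic gradient along the trajectory. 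Summing over $T$ steps gives $|\gennl| \le C G^2 T \eta_m^2$, which is $\mathcal{O}(\eta_m)$ whenever $T = \mathcal{O}(1/\eta_m)$.

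For the $n \to \infty$ limit, my plan is to observe that the bound derived above is $\mathcal{O}(T \eta_m^2)$, and that in the regime considered throughout Section 3 (small learning rate, where schedules such as $\eta_t = c / (\beta(t+1))$ are used as in Corollary 3.8) one naturally has $\eta_m$ shrinking with $n$, e.g.\ $\eta_m = \mathcal{O}(1/n^c)$, so that $\mathcal{O}(\eta_m) \to 0$. The main obstacle I anticipate is making the hidden constant in the Taylor remainder rigorous: this requires either an explicit $\beta$-smoothness assumption on $f$ (so that the second-order residual is uniformly bounded) or a local smoothness argument along the trajectory, and it also requires a uniform bound on $\|\nabla F_S(\mathbf{J_t}) - \epsilon(\mathbf{J_t})\|$, which in turn may need an a priori stability or moment bound on the SGD iterates. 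Once these auxiliary estimates are in place, the rest of the argument is a direct summation, and the three claims follow essentially by bookkeeping.
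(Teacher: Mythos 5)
Your treatment of the first two claims follows essentially the same route as the paper: substitute the SGD update into $(ii)^{lin}_t$, eliminate the noise term via $\mathbb{E}_{t-1}[\epsilon(\mathbf{J_t})]=0$ together with the tower property, and bound $|(ii)^{nl}_t|$ by the second-order Taylor remainder $\mathcal{O}(\Vert \mathbf{J_t}-\mathbf{J_{t-1}}\Vert^2)=\mathcal{O}(\eta_m^2)$, so that $T=\mathcal{O}(1/\eta_m)$ yields $|\gennl|=\mathcal{O}(\eta_m)$. The paper is in fact looser than you on this step: it writes each increment as $\mathcal{O}(\eta_t^2\Vert \nabla F_S(\mathbf{J_t})\Vert^2)$ with $M=\max_t \Vert \nabla F_S(\mathbf{J_t})\Vert$, whereas you explicitly track the stochastic part through a moment bound $G$; the auxiliary smoothness/moment assumptions you flag are indeed implicit in the paper's $\mathcal{O}(\cdot)$ notation.

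The genuine gap is in the third claim. You argue $\lim_{n\rightarrow\infty}|\gennl|=0$ by asserting that $\eta_m$ shrinks with $n$ (e.g.\ $\eta_m=\mathcal{O}(1/n^c)$). Nothing in the proposition couples the step-size schedule to the sample size --- the schedule $\eta_t=c/(\beta(t+1))$ you cite has $\eta_m=c/\beta$ independent of $n$ --- so your argument does not establish the stated limit, which concerns $n\rightarrow\infty$ at a fixed horizon and fixed learning rates. The paper's argument does not go through $\eta_m$ at all: since $S$ consists of i.i.d.\ samples from $\mu$, as $n\rightarrow\infty$ one has $\nabla F_S(\mathbf{w})\rightarrow\nabla F_\mu(\mathbf{w})$ and $F_S(\mathbf{w})\rightarrow F_\mu(\mathbf{w})$, hence each linear increment $(ii)^{lin}_t=(\mathbf{J_t}-\mathbf{J_{t-1}})^{\mathrm{T}}(\nabla F_{\mu}(\mathbf{J_{t-1}})-\nabla F_S(\mathbf{J_{t-1}}))$ tends to zero and the total generalization error $F_\mu(\mathbf{J_T})-F_S(\mathbf{J_T})$ tends to zero as well; since $\gennl$ is exactly the difference between this total error and the finite sum $\sum_{t=1}^T (ii)^{lin}_t$, it also tends to zero. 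You would need to add this law-of-large-numbers style argument (or some substitute that does not tie $\eta_m$ to $n$) to close the third claim.
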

    \begin{remark}
   Furthermore, we give a experimental exploration of the $\gennl$ in Appendix \ref{subsec:effectlaf}. We discover that if the optimizer doesn't enter the EoS (Edge of Stability) regime\citep{cohen2021gradient}, we have $\gennl \approx 0$. One of common used assumption in stability based generalization theories is $\eta_m \leq \frac{2}{\beta}$. For gradient descent, we have that maximum eigenvalue of Hessian hovers above $\frac{2}{\eta}$ when the optimizers enter EoS.  This indicates that the assumption $\eta_m \leq \frac{2}{\beta}$ is valid only when the optimizer doesn't enter EoS. In addition, we observe that the proposed bound can effectively represent the generalization error trend in Section \ref{sc:exp} under common used experiment settings.
\end{remark}

\begin{proof}
\textbf{Analyzing of $\genl$}

Because of $\epsilon(\mathbf{w})=C(\mathbf{w})^{\frac{1}{2}}\epsilon'$ and $\mathbb{E}[\epsilon']=0$ (detail in Equation (\ref{eq:SGDupdate2}) and Equation (\ref{eq:var})d), we \tcr{can get:}
\begin{equation}
\label{eq:zeronoise}
\begin{aligned}
    &   \quad \ \mathbb{E}_{t-1}[\epsilon_t^{\mathrm{T}} (\nabla F_\mu(\mathbf{J_t})-\nabla F_S(\mathbf{J_t}))] \\& =\mathbb{E}_{t-1}[(\epsilon')^{\mathrm{T}} (C(\mathbf{J_t})^{\frac{1}{2}})^{\mathrm{T}} (\nabla F_\mu(\mathbf{J_t})-\nabla F_S(\mathbf{J_t}))] \\&=\mathbb{E}_{t-1}[\epsilon']^{\mathrm{T}} \mathbb{E}_{t-1}[(C(\mathbf{J_t})^{\frac{1}{2}})^{\mathrm{T}} (\nabla F_\mu(\mathbf{J_t})-\nabla F_S(\mathbf{J_t}))]
    \\&=0.
\end{aligned}
\end{equation}


Combining with Formula (\ref{eq:SGDupdate2}), we have
\begin{equation}
        \mathbb{E}[\genl]=  \mathbb{E}[\sum_{t=0}^{T-1} \eta_t \nabla F_S(\mathbf{J_t})^{\mathrm{T}} (\nabla F_S(\mathbf{J_t})- \nabla F_\mu(\mathbf{J_t}))].
\end{equation}

\textbf{Analyzing of $\gennl$}

    Here, we denote $M \triangleq \max \limits _t \| \nabla F_S(\mathbf{J_t}) \| $. According to the definition of $\gennl$.
\begin{equation}
    \begin{aligned}
        \vert \gennl \vert & \leq  | F_\mu(\mathbf{J_T}) - F_\mu^{lin}(\mathbf{J_T})| + | F_S(\mathbf{J_T}) - F_S^{lin}(\mathbf{J_T})|
        \\ & = | \sum_{t=1}^T \mathcal{O}(\| \mathbf{J_{t+1}}-\mathbf{J_{t}} \|^2)| + |\sum_{t=1}^T \mathcal{O}(\| \mathbf{J_{t+1}}-\mathbf{J_{t}} \|^2)|
        \\ & =  \sum_{t=1}^T \mathcal{O}(\eta_t^2  \| \nabla F_S(\mathbf{J_t}) \|^2) 
        \\ & = \mathcal{O}(T \eta_m^2 M^2)
        \\ & = \mathcal{O}(\frac{1}{\eta_m} \eta_m^2 M^2)
        \\ & = \mathcal{O}(\eta_m)
    \end{aligned}
\end{equation}
Because all the element of training set $S$ is sampled from distribution $\mu$, we have
$
    \lim\limits_{n\rightarrow \infty} \nabla F_{S}(\mathbf{w})=F_{\mu}(\mathbf{w}),
$
Therefore: 
\begin{equation}
    \lim \limits_{n\rightarrow \infty} (ii)^{l}_t = \lim \limits_{n\rightarrow \infty} (\mathbf{J_t}-\mathbf{J_{t-1}})^{\mathrm{T}}(\nabla F_{\mu}(\mathbf{J_{t-1}})-\nabla F_S(\mathbf{J_{t-1}}))=0.
\end{equation}
What's more, we also have:
\begin{equation}
    \lim \limits_{n\rightarrow \infty} (ii)^{l}_t =F_{\mu}(\mathbf{\mathbf{J_T}})-F_S(\mathbf{J_T})=0.
\end{equation}
Because $F_{\mu}(\mathbf{\mathbf{J_T}})-F_S(\mathbf{J_T})=\sum_{t=1}^T (ii)^{lin}_t+\sum_{t=1}^T (ii)^{nl}_t$,
we have:
\begin{equation}
   \lim \limits_{n\rightarrow \infty} \vert \gennl \vert = \lim \limits_{n\rightarrow \infty} \left\vert \sum_{t=1}^T (ii)^{nl}_t \right\vert = \lim \limits_{n\rightarrow \infty} \left\vert F_{\mu}(\mathbf{\mathbf{J_T}})-F_S(\mathbf{J_T})-\sum_{t=1}^T (ii)^{lin}_t \right\vert=\vert 0-0\vert=0
\end{equation}

\end{proof}
According to the Equation (\ref{eq:gengap}), we analyze the generalization error of 
$\mathcal{F}_{\mathbf{J}|S}\triangleq \{ \sum_{t=0}^{T-1} \mathbf{w_t}^{\mathrm{T}} \nabla f(\mathbf{J_t})\ | \ \mathbf{w_t}=\delta_t \frac{\nabla f(\mathbf{J_t})}{\Vert \nabla f(\mathbf{J_t}) \Vert}\}$ as a proxy for analyzing generalization error of the function trained using SGD or GD algorithm. The value of $\genl$ is equal to the generalization error of $\mathcal{F}_{\mathbf{J}|S}$.
\tcr{To analyze $\mathcal{F}_{\mathbf{J}|S}$, we construct an addictive linear space as}
$\mathcal{L}_{\mathbf{J}|S} \triangleq \{ \sum_{t=0}^{T-1} \mathbf{w_t}^{\mathrm{T}}  \nabla f(\mathbf{J_t}) \  |   \   \Vert \mathbf{w_t} \Vert \leq \delta_t \}$, where $\delta_t \triangleq \Vert \eta_t \nabla F_S(\mathbf{J_t}) \Vert$. Here, we use $\mathbf{J}|S$ to emphasize that $\mathbf{J}$ depends on $S$.
 
\tcr{Under Assumption \ref{as:bdpg} (that is introduced in the main paper), we can have the following lemma.}
\begin{lemma}
    \label{lm:mbd0}
    \tcb{Under Assumption \ref{as:bdpg}, given $S \sim \mu^{n}$, let $\mathbf{J}=\mathcal{A}(S)$, we have:}
    \begin{equation}
        \mathbb{E}[\genl] \leq 2\gamma' \mathbb{V}_m\mathbb{E} R_S(\mathcal{L}_{\mathbf{J}|S}),
    \end{equation}
    where $\mathbb{V}(\mathbf{w})=\frac{\Vert \nabla F_S(\mathbf{w})\Vert}{\mathbb{E}_{U\subset S} \Vert \frac{|U|}{n}\nabla F_U(\mathbf{w})-\frac{|S|-|U|}{n}\nabla F_{S/U}(\mathbf{w}) \Vert}$, $\mathbb{V}_m=\max \limits_t\mathbb{V}(\mathbf{J_t})$ and $\gamma'=\max \{1, \max \limits_{U\subset S; t} \frac{|U|\Vert \nabla F_U(\mathbf{J_t}) \Vert}{n\Vert \nabla F_S(\mathbf{J_t})\Vert}\}\gamma$.
\end{lemma}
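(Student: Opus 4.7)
The plan is to express $\mathbb{E}[\genl]$ as a generalization gap for a specific function inside the additive linear class $\mathcal{L}_{\mathbf{J}|S}$ and then control this gap by the (modified) Rademacher complexity $R_S(\mathcal{L}_{\mathbf{J}|S})$. Starting from Proposition~\ref{pp:gap}, I have
\[
\mathbb{E}[\genl] = \mathbb{E}\Bigl[\sum_{t=0}^{T-1} \eta_t\, \nabla F_S(\mathbf{J_t})^{\mathrm{T}}\bigl(\nabla F_S(\mathbf{J_t}) - \nabla F_\mu(\mathbf{J_t})\bigr)\Bigr].
\]
Setting $\mathbf{w}_t^{\star} = \eta_t \nabla F_S(\mathbf{J_t})$ produces $h^{\star}(z) = \sum_t (\mathbf{w}_t^{\star})^{\mathrm{T}} \nabla f(\mathbf{J_t}, z)$ with $\|\mathbf{w}_t^{\star}\| = \delta_t$, so $h^{\star} \in \mathcal{L}_{\mathbf{J}|S}$, and the integrand above unpacks exactly as $\tfrac{1}{n}\sum_i h^{\star}(z_i) - \mathbb{E}_{z\sim\mu}[h^{\star}(z)]$. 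Upper-bounding by the supremum over the class gives $\mathbb{E}[\genl] \leq \mathbb{E}_S\bigl[\sup_{h \in \mathcal{L}_{\mathbf{J}|S}} (\tfrac{1}{n}\sum_i h(z_i) - \mathbb{E}_\mu h)\bigr]$.

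Next I would introduce a ghost sample $S' \sim \mu^n$ so that $\mathbb{E}_\mu h = \mathbb{E}_{S'}[\tfrac{1}{n}\sum_i h(z'_i)]$ for any fixed $h$, and then carry out a symmetrization step using Rademacher signs $\sigma_i \in \{\pm 1\}$. The technical obstacle is that the class $\mathcal{L}_{\mathbf{J}|S}$ depends on $S$ through both the gradient features $\nabla f(\mathbf{J_t}, \cdot)$ and the radii $\delta_t = \|\eta_t \nabla F_S(\mathbf{J_t})\|$; swapping $z_i \leftrightarrow z_i'$ therefore perturbs the class itself, which blocks the usual Rademacher reduction to an $S$-only expectation. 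This is precisely where $\gamma'$ enters: by construction $\gamma'/\gamma$ uniformly dominates $\tfrac{|U|\|\nabla F_U(\mathbf{J_t})\|}{n\|\nabla F_S(\mathbf{J_t})\|}$ over all $U \subset S$ and $t$, and Assumption~\ref{as:bdpg} controls $\|\nabla F_\mu(\mathbf{J_t})\|$ by $\gamma \|\nabla F_S(\mathbf{J_t})\|$; taken together these bound every gradient norm that can appear after a symmetrization swap (on a subset of $S$, on $S'$, or on the distribution $\mu$) by $\gamma'$ times the original full-sample gradient norm. Combined with the factor $2$ that symmetrization always contributes, this yields the claimed $2\gamma'$ in front of the Rademacher-type expectation.

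Finally I would evaluate $R_S(\mathcal{L}_{\mathbf{J}|S})$ in closed form. Because the class is the additive image of $\ell_2$-balls of radii $\delta_t$, applying Cauchy--Schwarz to the inner supremum gives
\[
R_S(\mathcal{L}_{\mathbf{J}|S}) = \sum_t \delta_t\, \mathbb{E}_\sigma \Bigl\|\tfrac{1}{n} \sum_{i=1}^n \sigma_i \nabla f(\mathbf{J_t}, z_i)\Bigr\|.
\]
Identifying $\sigma$ with the subset $U = \{i : \sigma_i = +1\}$ rewrites the inner expectation as $\mathbb{E}_{U \subset S}\|\tfrac{|U|}{n}\nabla F_U(\mathbf{J_t}) - \tfrac{n-|U|}{n}\nabla F_{S\setminus U}(\mathbf{J_t})\|$, which by the very definition of $\mathbb{V}(\mathbf{J_t})$ equals $\|\nabla F_S(\mathbf{J_t})\|/\mathbb{V}(\mathbf{J_t})$. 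Factoring $\mathbb{V}_m = \max_t \mathbb{V}(\mathbf{J_t})$ out of the sum absorbs the $t$-dependent $\mathbb{V}$ into the uniform $\mathbb{V}_m$, producing exactly the right-hand side of the lemma. The main hurdle throughout is the decoupling in step two: justifying rigorously that Assumption~\ref{as:bdpg} together with the subset-ratio in $\gamma'$ is the correct multiplicative toll for symmetrizing a sample-dependent class; the remaining steps — the ghost-sample expansion and the identification of the Rademacher sum with $\|\nabla F_S\|/\mathbb{V}$ — are essentially bookkeeping.
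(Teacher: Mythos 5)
Your proposal follows the same architecture as the paper's proof: identify $\mathbb{E}[\genl]$ (via Proposition \ref{pp:gap}) with the empirical-minus-population gap of the specific additive linear functional $h^\star(z)=\sum_t \eta_t\nabla F_S(\mathbf{J_t})^{\mathrm{T}}\nabla f(\mathbf{J_t},z)$, symmetrize with a ghost sample, pay for the data-dependence of the class with $\gamma'$ via Assumption \ref{as:bdpg}, and evaluate $R_S(\mathcal{L}_{\mathbf{J}|S})$ exactly by identifying $\mathbb{E}_\sigma\Vert\frac{1}{n}\sum_i\sigma_i\nabla f(\mathbf{J_t},z_i)\Vert$ with $\Vert\nabla F_S(\mathbf{J_t})\Vert/\mathbb{V}(\mathbf{J_t})$. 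The one place you diverge is the order of operations: you pass to $\sup_{h\in\mathcal{L}_{\mathbf{J}|S}}$ on the population side \emph{before} symmetrizing, which forces you to symmetrize a sample-dependent class --- the obstacle you yourself flag as the main hurdle. The paper instead carries the singleton $h^\star$ (its class $\mathcal{F}_{\mathbf{J}|S}$) through the ghost-sample step, so that after taking $\mathbb{E}_{S'}$ the only population object left is $|S_+|\nabla F_\mu(\mathbf{J_t})$, and then dominates the resulting mixed expression by $\gamma'\mathbb{V}_m R_S(\mathcal{L}_{\mathbf{J}|S})$ in a single inequality chain (Equation (\ref{eq:rdieq})): Assumption \ref{as:bdpg} controls the $\nabla F_\mu$ term, the subset ratio inside $\gamma'$ controls the $g_{S_-}$ term, and $\mathbb{V}_m$ is precisely the price of replacing the specific direction $\nabla F_S/\Vert\nabla F_S\Vert$ by the Rademacher average over the ball. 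So $\mathbb{V}_m$ is earned in that final comparison, not merely read off from the closed form of $R_S$; your sketch reaches the right answer but compresses the needed inequality $\sum_t\delta_t\Vert\nabla F_S(\mathbf{J_t})\Vert\le\mathbb{V}_m\sum_t\delta_t\Vert\nabla F_S(\mathbf{J_t})\Vert/\mathbb{V}(\mathbf{J_t})$ into the phrase about absorbing the $t$-dependent $\mathbb{V}$. With the paper's ordering adopted, your outline matches its proof step for step, at essentially the same level of rigor in the decoupling step.
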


\begin{proof}

For a function $h$, we \tcr{define}
that $h_\mu \triangleq \mathbb{E}_{z\sim\mu}[g(z)]$ and $h_S=\frac{1}{n}\sum_{z_i \in S}h(z_i)$. Given a function space, the maximum generalization error of the space \tcr{can be} defined as: $\Phi(S,H) \triangleq \sup\limits_{h\in H}(h_\mu-h_S)$
\begin{equation}
\label{eq:gbound}
\begin{aligned}
    \Phi(S,H|S)&=\sup\limits_{h\in H|S}(h_\mu-h_S)\\
    &=\sup\limits_{h\in H|S}(\mathbb{E}_{S'}h_{S'}-h_S)\\
    &\leq \mathbb{E}_{S'}\sup\limits_{h\in H|S}(h_{S'}-h_S)\\
    &=\mathbb{E}_{S',\sigma}\sup\limits_{h\in H|S}(\frac{1}{n}\sum_i^n\sigma_i(h(z_i^{\sigma_i})-h(z_i^{-\sigma_i})))\\
    &\leq \mathbb{E}_{S',\sigma}\sup\limits_{h\in H|S}(\frac{1}{n}\sum_i^n\sigma_i(h(z_i^{\sigma_i})))+\mathbb{E}_{S',\sigma}\sup\limits_{h\in H|S}(\frac{1}{n}\sum_i^n\sigma_i(h(z_i^{\sigma_i}))) \\
    &=2\mathbb{E}_{S',\sigma}\sup\limits_{h\in H|S}(\frac{1}{n}\sum_i^n\sigma_i(h(z_i^{\sigma_i}))),
\end{aligned}
\end{equation}
where $S'$ is another \tcr{i.i.d} sample set \tcr{drawn} from $\mu^n$ and $\sigma$ denotes the Rademacher variable. The $\sigma_i$ in $z_i^{\sigma_i}$ denotes $z_i^{\sigma_i}$ \tcr{that} belongs to $S$ or $S'$. if $\sigma_i=-1$ $z_i^{\sigma_i}\in S$, otherwise, $z_i^{\sigma_i}\in S'$. 

\begin{equation}
\begin{aligned}
    R_S(\mathcal{L}_{\mathbf{J}|S}) & \triangleq \mathbb{E}_{\sigma}\sup\limits_{h\in \mathcal{L}_{\mathbf{J}|S}}(\frac{1}{n}\sum_i^n\sigma_i h(z_i) ) 
     \\ &=\mathbb{E}_{\sigma}\sup\limits_{h\in \mathcal{L}_{\mathbf{J}|S}}(\frac{1}{n}(\sum_{z\in S_+} h(z)-\sum_{z\in S_-} h(z)))\\
    &=\mathbb{E}_{\sigma}(\frac{1}{n}\sum_{t=0}^\mathrm{T} \delta_t \Vert g_{S_+}(\mathbf{J_t})-g_{S_-}(\mathbf{J_t})\Vert),
\end{aligned}
\end{equation}
where $S_+\triangleq \{z_i \ | \ \sigma_i = +1 \}$ and $S_-\triangleq \{z_i \ | \ \sigma_i = -1 \}$, and $g_{S}(\mathbf{w})\triangleq \vert S \vert \nabla F_S(\mathbf{w})$.

\begin{equation}
\begin{aligned}
\mathbb{E}_{S',\sigma}\sup\limits_{h\in \mathcal{F}_{\mathbf{J}|S}}(\frac{1}{n}\sum_i^n\sigma_i(h(z_i^{\sigma_i})))
&=\mathbb{E}_{S',\sigma}\sup\limits_{h\in \mathcal{F}_{\mathbf{J}|S}}(\frac{1}{n}(\sum_{z\in S'_+} h(z)-\sum_{z\in S_-} h(z)))\\
&= \mathbb{E}_{S',\sigma}(\frac{1}{n}\sum_{t=0}^{T-1} \delta_t \frac{g_S(\mathbf{J_t})}{\Vert g_S(\mathbf{J_t})\Vert} ( g_{S'_+}(\mathbf{J_t})-g_{S_-}(\mathbf{J_t})))\\
&= \mathbb{E}_{S',\sigma}(\frac{1}{n}\sum_{t=0}^{T-1} \delta_t \frac{g_S(\mathbf{J_t})}{\Vert g_S(\mathbf{J_t})\Vert} ( |S_+|\nabla F_\mu(\mathbf{J_t})-g_{S_-}(\mathbf{J_t})))
\end{aligned}
\end{equation}
where $S'_{+}$ is a subset of $S'$ with $|S'_+|=|S_+|$.
\tcr{Defining} $k\triangleq \gamma' \mathbb{V}_m$, we have:

\begin{equation}
\label{eq:rdieq}
\begin{aligned}
&k\mathbb{E}_{\sigma}\sup\limits_{h\in \mathcal{L}_{\mathbf{J}|S}}(\frac{1}{n}\sum_i^n\sigma_i h(z_i) )-\mathbb{E}_{S',\sigma}\sup\limits_{h\in \mathcal{F}_{\mathbf{J}|S}}(\frac{1}{n}\sum_i^n\sigma_i(h(z_i))) 
\\&=k\mathbb{E}_{\sigma}(\frac{1}{n}\sum_{t=0}^\mathrm{T} \delta_t \Vert g_{S_+}(\mathbf{J_t})-g_{S_-}(\mathbf{J_t})\Vert)-\mathbb{E}_{S',\sigma}(\frac{1}{n}\sum_{t=0}^{T} \delta_t \frac{g_S(\mathbf{J_t})}{\Vert g_S(\mathbf{J_t})\Vert} ( g_{S'_+}(\mathbf{J_t})-g_{S_-}(\mathbf{J_t})))
\\ &=k\mathbb{E}_{\sigma}(\frac{1}{n}\sum_{t=0}^\mathrm{T} \delta_t \Vert g_{S_+}(\mathbf{J_t})-g_{S_-}(\mathbf{J_t})\Vert-\frac{1}{n}\sum_{t=0}^{T} \delta_t \frac{g_S(\mathbf{J_t})}{\Vert g_S(\mathbf{J_t})\Vert} (|S_+| \nabla F_\mu(\mathbf{J_t})-g_{S_-}(\mathbf{J_t})))
\\ &\geq k \mathbb{E}_{\sigma}(\frac{1}{n}\sum_{t=0}^\mathrm{T} \delta_t \Vert g_{S_+}(\mathbf{J_t})-g_{S_-}(\mathbf{J_t})\Vert-\frac{1}{n}\sum_{t=0}^{T} \delta_t   \Vert |S_+| \nabla F_\mu(\mathbf{J_t})-g_{S_-}(\mathbf{J_t}) \Vert) 
\\ & \geq k \frac{1}{n}\sum_{t=0}^\mathrm{T} \delta_t \mathbb{E}_{\sigma}(\Vert g_{S_+}(\mathbf{J_t})-g_{S_-}(\mathbf{J_t})\Vert)-\sum_{t=0}^{T} \delta_t  \gamma' \Vert  \nabla F_S(\mathbf{J_t}) \Vert
\\& \geq 0
\end{aligned}    
\end{equation}

Therefore, combining Equation (\ref{eq:gbound}) and (\ref{eq:rdieq}), we have $\mathbb{E}[\genl] \leq 2\gamma' \mathbb{V}_m\mathbb{E} R_S(\mathcal{L}_{\mathbf{J}|S})$.

\end{proof}




\begin{lemma}
\label{lm:calcom}
    Given $\mathbf{J}=\mathcal{A}(S)$, the formula $R_S(\mathcal{L}_{\mathbf{J}|S})$ can be upper \tcr{bounded} with:
    \begin{equation}
        R_S(\mathcal{L}_{\mathbf{J}|S}) \leq -\mathbb{E}\int _t \frac{d F_S(\mathbf{J_t})}{\sqrt{n}} \sqrt{1+\frac{\operatorname{Tr}(\Sigma(\mathbf{w}))}{\| \nabla F_S(\mathbf{w}) \|_2^2}}.
    \end{equation}
\end{lemma}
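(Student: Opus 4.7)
The plan is to exploit the additive-linear structure of $\mathcal{L}_{\mathbf{J}|S}$ so that the supremum inside $R_S$ can be evaluated in closed form, then apply Jensen's inequality on the Rademacher randomness to convert a first-moment quantity into a second-moment quantity in which the trace of the gradient covariance appears naturally. First, for a fixed sign vector $\sigma$, any element of $\mathcal{L}_{\mathbf{J}|S}$ has the form $h(z) = \sum_{t=0}^{T-1} \mathbf{w_t}^{\mathrm{T}} \nabla f(\mathbf{J_t}, z)$ with $\|\mathbf{w_t}\|\le \Delta_t$, so
\begin{equation*}
\frac{1}{n}\sum_{i=1}^{n}\sigma_i h(z_i) = \sum_{t=0}^{T-1} \mathbf{w_t}^{\mathrm{T}} \left(\frac{1}{n}\sum_{i=1}^{n}\sigma_i \nabla f(\mathbf{J_t},z_i)\right).
\end{equation*}
Since $\sup_{\|\mathbf{w_t}\|\le\Delta_t}\mathbf{w_t}^{\mathrm{T}}\mathbf{v}_t = \Delta_t\|\mathbf{v}_t\|$, taking the sup term-by-term gives $R_S(\mathcal{L}_{\mathbf{J}|S}) = \mathbb{E}_\sigma \sum_{t=0}^{T-1}\Delta_t \bigl\|\tfrac{1}{n}\sum_i\sigma_i\nabla f(\mathbf{J_t},z_i)\bigr\|$.

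Next I would apply Jensen's inequality ($\mathbb{E}\|X\|\le \sqrt{\mathbb{E}\|X\|^2}$) to each term, then compute the second moment using independence of the $\sigma_i$ and $\mathbb{E}\sigma_i=0$, $\mathbb{E}\sigma_i^2=1$:
\begin{equation*}
\mathbb{E}_\sigma\Bigl\|\tfrac{1}{n}\sum_i\sigma_i\nabla f(\mathbf{J_t},z_i)\Bigr\|^2 = \tfrac{1}{n^2}\sum_i\|\nabla f(\mathbf{J_t},z_i)\|^2 = \tfrac{1}{n}\bigl(\operatorname{Tr}(\Sigma(\mathbf{J_t})) + \|\nabla F_S(\mathbf{J_t})\|^2\bigr),
\end{equation*}
using the definition of $\Sigma(\mathbf{w})$. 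Substituting $\Delta_t = \eta_t\|\nabla F_S(\mathbf{J_t})\|$ and collecting factors yields
\begin{equation*}
R_S(\mathcal{L}_{\mathbf{J}|S}) \le \sum_{t=0}^{T-1}\frac{\eta_t\|\nabla F_S(\mathbf{J_t})\|^2}{\sqrt{n}}\sqrt{1+\frac{\operatorname{Tr}(\Sigma(\mathbf{J_t}))}{\|\nabla F_S(\mathbf{J_t})\|^2}}.
\end{equation*}

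Finally, I would use the first-order Taylor expansion from Equation (\ref{eq:linearexpansion}) applied to $F_S$ along the update: $F_S(\mathbf{J_{t+1}}) - F_S(\mathbf{J_t}) = (\mathbf{J_{t+1}}-\mathbf{J_t})^{\mathrm{T}}\nabla F_S(\mathbf{J_t}) + \mathcal{O}(\|\mathbf{J_{t+1}}-\mathbf{J_t}\|^2)$. For GD this is exactly $-\eta_t\|\nabla F_S(\mathbf{J_t})\|^2$ up to the $\mathcal{O}(\eta_t^2)$ correction; for SGD, $\mathbf{J_{t+1}}-\mathbf{J_t} = -\eta_t\nabla F_S(\mathbf{J_t}) + \eta_t\epsilon(\mathbf{J_t})$, and taking the conditional expectation $\mathbb{E}_{t-1}$ (which kills the $\epsilon$-cross-term as in Equation (\ref{eq:zeronoise})) yields the same identification $\eta_t\|\nabla F_S(\mathbf{J_t})\|^2 = -\mathbb{E}_t\,dF_S(\mathbf{J_t})$ at leading order. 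Using the notation $\int_t U(t)\,dV(t)=\sum_t U(t)(V(t+1)-V(t))$ from the paper's preliminaries and taking expectation over the trajectory randomness gives the stated bound.

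\paragraph{Main Obstacle}
The cleanest part is the Rademacher calculation itself; the delicate step is the conversion of $\eta_t\|\nabla F_S(\mathbf{J_t})\|^2$ into $-dF_S(\mathbf{J_t})$, because for SGD the per-step decrement of $F_S$ contains both a noise cross-term and a second-order curvature correction of size $\mathcal{O}(\eta_t^2\|\nabla F_B(\mathbf{J_t})\|^2)$. My plan is to absorb the cross-term by conditional expectation (justified by $\mathbb{E}_{t-1}\epsilon(\mathbf{J_t})=0$) and to push the remaining quadratic remainder into the $\mathcal{O}(\eta_m)$ bucket already handled by Proposition \ref{pp:gap} when chaining back into Theorem \ref{thm:mainbd}; this keeps the lemma statement clean and the algebra of the proof proper confined to the Rademacher side.
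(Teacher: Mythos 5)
Your proposal is correct and follows essentially the same route as the paper's own proof: the closed-form supremum over the $\ell_2$-ball, Jensen plus $\mathbb{E}[\sigma_i\sigma_j]=0$ to obtain $\frac{1}{n}\sum_i\|\nabla f(\mathbf{J_t},z_i)\|^2$, the trace identity for $\Sigma(\mathbf{J_t})$, and the small-learning-rate identification $\eta_t\|\nabla F_S(\mathbf{J_t})\|^2 \approx -\mathbb{E}_\epsilon\,[F_S(\mathbf{J_{t+1}})-F_S(\mathbf{J_t})]$ to pass to the $\int_t dF_S(\mathbf{J_t})$ form. Your explicit treatment of the SGD cross-term via conditional expectation and of the quadratic remainder via the $\mathcal{O}(\eta_m)$ term is a slightly more careful rendering of the approximation the paper makes tacitly, but it is not a different argument.
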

\begin{proof}
     \tcr{Let us} start with the calculation of $R_{S}(\mathbf{w}^{\mathrm{T}}  \nabla f(\mathbf{J_t}))$:

\begin{equation}
\begin{aligned}
R_{S}(\{\mathbf{w}^{\mathrm{T}}  \nabla f(\mathbf{J_t})| \Vert \mathbf{w} \Vert\leq \delta\})&= \frac{1}{n} \mathbb{E}_{\sigma}\left(\sup _{\left\| \mathbf{w} \right\| \leq \delta} \mathbf{w}^\mathrm{T} \sum_{i=1}^{n} \sigma_{i} \nabla f(\mathbf{J_t},z_i)\right) \\
&=\frac{\delta}{n} \mathbb{E}_{\sigma}\left(\sqrt{\left\|\sum_{i=1}^{n} \sigma_{i} \nabla f(\mathbf{J_t},z_i)\right\|^{2}}\right) \\
& \leq \frac{\delta}{n}\left(\sqrt{\mathbb{E}_{\sigma}\left\|\sum_{i=1}^{n} \sigma_{i} \nabla f(\mathbf{J_t},z_i)\right\|^{2}}\right) \\
& \overset{\blacktriangle}{\leq} \frac{\delta}{n}\left(\sqrt{\mathbb{E}_{\sigma} \sum_{i=1}^{n}\left\|\sigma_{i} \nabla f(\mathbf{J_t},z_i)\right\|^{2}}\right) \\
&=\frac{\delta}{n} \sqrt{\sum_{i=1}^{n}\left\|\nabla f(\mathbf{J_t},z_i)\right\|^{2}},
\end{aligned}    
\end{equation}
where $\blacktriangle$ \tcr{represents using} the relation that for $i,j$ satisfying $ i \neq j$, we have $\mathbb{E} \sigma_i \sigma_j=0$.

Because $w_i$ is independent \tcr{of} $w_j$ if $i\neq j$, we have:
\begin{equation}
\label{eq:calfuncm}
\begin{aligned}
    R_S(\mathcal{L}_{\mathbf{J}|S})&= R_S(\{ f(\mathbf{J_0})+\sum_{t=0}^{T-1} \mathbf{w_t}^{\mathrm{T}}  \nabla f(\mathbf{J_t}) \  |   \   \Vert \mathbf{w_t} \Vert \leq \delta_t \})
    \\&=\sum_{t=0}^{T-1}R_{S}(\{\mathbf{w_t}^{\mathrm{T}}  \nabla f(\mathbf{J_t})| \Vert \mathbf{w_t} \Vert\leq \delta_t \})
    \\&\leq\sum_{t=0}^{T-1} \frac{\delta_t}{n} \sqrt{\sum_{i=1}^{n}\Vert \nabla f(\mathbf{J_t},z_i)\Vert^{2}}.
\end{aligned}
\end{equation}

The covariance of gradient noise can be calculated as:
\begin{equation}
    \label{eq:gradientnoise}
    \begin{aligned}
        \operatorname{Tr}[\Sigma(\mathbf{w})] &= \operatorname{Tr}[\frac{1}{n} \sum_{i=1}^n \nabla f(\mathbf{w},z_i) \nabla f(\mathbf{w},z_i)^{\mathrm{T}}-\nabla F_S(\mathbf{w}) \nabla F_S(\mathbf{w})^\mathrm{T}] \\&=\frac{1}{n} \sum_{i=1}^n \operatorname{Tr}[\nabla f(\mathbf{w},z_i) \nabla f(\mathbf{w},z_i)^{\mathrm{T}}]-\operatorname{Tr}[\nabla F_S(\mathbf{w}) \nabla F_S(\mathbf{w})^\mathrm{T}]\\&=\frac{1}{n} \sum_{i=1}^n \Vert \nabla f(\mathbf{w},z_i)^2 \Vert -\Vert \nabla F_S(\mathbf{w}) \Vert^2 
    \end{aligned}
\end{equation}

Taking Equation (\ref{eq:calfuncm}) and $\delta_t \triangleq \Vert \eta_t \nabla F_S(\mathbf{J_t}) \Vert$ into Equation (\ref{eq:gradientnoise}), we have :
\begin{equation}
\begin{aligned}
    R_S(\mathcal{L}_{\mathbf{J}|S}) &\leq\sum_{t=0}^{T-1} \frac{\delta_t}{n} \sqrt{\sum_{i=1}^{n}\Vert \nabla f(\mathbf{J_t},z_i)\Vert^{2}}
    \\ & = \sum_{t=0}^{T-1} \frac{\delta_t}{\sqrt{n}} \sqrt{\operatorname{Tr}[\Sigma(\mathbf{J_t})]+ \Vert \nabla F_S(\mathbf{J_t}) \Vert^2} 
    \\ & = \sum_{t=0}^{T-1} \frac{  \eta_t \Vert \nabla F_S(\mathbf{J_t}) \Vert  }{\sqrt{n}} \sqrt{\operatorname{Tr}[\Sigma(\mathbf{J_t})]+ \Vert \nabla F_S(\mathbf{J_t}) \Vert^2} 
\end{aligned}
\end{equation}

When $\eta_t$ is small, $\delta_t \approx -\mathbb{E}_\epsilon \frac{(\mathbf{J_{t+1}}-\mathbf{J_t})^{\mathrm{T}}\nabla F_S(\mathbf{J_t})}{\Vert \nabla F_S(\mathbf{J_t}) \Vert} \approx -\mathbb{E}_\epsilon \frac{F_S(\mathbf{J_{t+1}})-F_S(\mathbf{J_t})}{\Vert \nabla F_S(\mathbf{J_t}) \Vert}$ holds, therefore \tcr{we have}:
\begin{equation}
\begin{aligned}
    \mathbb{E}R_S(\mathcal{L}_{\mathbf{J}|S})&\leq \mathbb{E}\sum_{t=0}^{T-1} \frac{\delta_t}{n} \sqrt{\sum_{i=1}^{n}\left\|\nabla f(\mathbf{J_t},z_i)\right\|_{2}^{2}}
    \\&\approx -\mathbb{E}\sum_{t=0}^{T-1} \frac{F_S(\mathbf{J_{t+1}})-F_S(\mathbf{J_t})}{\sqrt{n}} \sqrt{1+\frac{\operatorname{Tr}(\Sigma(\mathbf{w}))}{\| \nabla F_S(\mathbf{w}) \|_2^2}}
    \\&\approx -\mathbb{E}\int _t \frac{d F_S(\mathbf{J_t})}{\sqrt{n}} \sqrt{1+\frac{\operatorname{Tr}(\Sigma(\mathbf{w}))}{\| \nabla F_S(\mathbf{w}) \|_2^2}}
\end{aligned}
\end{equation}

\end{proof}

\begin{theorem}
    Under Assumption \ref{as:bdpg}, given $S \sim \mu^{n}$, let $\mathbf{J}=\mathcal{A}(S)$, where $\mathcal{A}$ denotes the SGD or GD algorithm training with $T$ steps, we have:
\begin{equation}
        \mathbb{E}[F_\mu(\mathbf{J_T})-F_S(\mathbf{J_T})] \leq - 2 \gamma' \mathbb{V}_m \mathbb{E}\int _t \frac{d F_S(\mathbf{J_t})}{\sqrt{n}} \sqrt{1+\frac{\operatorname{Tr}(\Sigma(\mathbf{J_t}))}{\| \nabla F_S(\mathbf{J_t}) \|_2^2}}+\mathcal{O}(\eta_m),
\end{equation}
where $\mathbb{V}(\mathbf{w})=\frac{\Vert \nabla F_S(\mathbf{w})\Vert}{\mathbb{E}_{U\subset S} \Vert \frac{|U|}{n}\nabla F_U(\mathbf{w})-\frac{|S|-|U|}{n}\nabla F_{S/U}(\mathbf{w}) \Vert}$, $\mathbb{V}_m=\max_t\mathbb{V}(\mathbf{J_t})$ and $\gamma'=\max \{1, \max \limits_{U\subset S; t} \frac{|U|\Vert \nabla F_U(\mathbf{J_t}) \Vert}{n\Vert \nabla F_S(\mathbf{J_t})\Vert}\}\gamma$.
\end{theorem}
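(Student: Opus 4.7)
The plan is to follow the three-step sketch. First, starting from Equation (\ref{eq:step_decompose_average}), I would apply a first-order Taylor expansion of $f(\mathbf{J_t})-f(\mathbf{J_{t-1}})$ around $\mathbf{J_{t-1}}$ to split each summand $(ii)_t$ into a linear part $(ii)_t^{lin} = (\mathbf{J_t}-\mathbf{J_{t-1}})^{\mathrm{T}}(\nabla F_\mu(\mathbf{J_{t-1}})-\nabla F_S(\mathbf{J_{t-1}}))$ and a nonlinear remainder of order $\|\mathbf{J_t}-\mathbf{J_{t-1}}\|^2$. Summing yields the decomposition $\mathbb{E}[F_\mu(\mathbf{J_T})-F_S(\mathbf{J_T})] = \mathbb{E}[\genl] + \mathbb{E}[\gennl]$. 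Because each update satisfies $\|\mathbf{J_t}-\mathbf{J_{t-1}}\|=\mathcal{O}(\eta_t)$ and the typical horizon is $T=\mathcal{O}(1/\eta_m)$, a direct telescoping estimate with $\eta_t \leq \eta_m$ absorbs the nonlinear remainder into the $\mathcal{O}(\eta_m)$ term in the theorem. I would state this cleanly as an auxiliary proposition and defer its proof.

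Next I would plug the SGD update (\ref{eq:SGDupdate2}) into $\genl$. Since the gradient noise $\epsilon(\mathbf{J_t}) = C(\mathbf{J_t})^{1/2}\epsilon'$ is zero mean conditional on $\mathbf{J_t}$, the cross terms involving $\epsilon$ vanish in expectation, leaving $\mathbb{E}[\genl] = \mathbb{E}\sum_t \eta_t\,\nabla F_S(\mathbf{J_t})^{\mathrm{T}}(\nabla F_S(\mathbf{J_t})-\nabla F_\mu(\mathbf{J_t}))$. This expression has the form ``evaluate a fixed linear functional of the per-sample gradients on the population minus the sample,'' so I would relax it to a supremum over the additive linear class $\mathcal{L}_{\mathbf{J}|S} = \{\sum_t \mathbf{w_t}^{\mathrm{T}}\nabla f(\mathbf{J_t}) : \|\mathbf{w_t}\|\leq \delta_t\}$ with $\delta_t = \|\eta_t \nabla F_S(\mathbf{J_t})\|$, so that the generalization gap along the trajectory is controlled by a Rademacher-style quantity $R_S(\mathcal{L}_{\mathbf{J}|S})$.

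The main obstacle is that the standard symmetrization argument breaks down because $\mathcal{L}_{\mathbf{J}|S}$ depends on $S$ through the trajectory, while the Rademacher template requires a data-independent class. My plan is to introduce a ghost sample $S'\sim\mu^n$ and Rademacher variables $\sigma$, then use $F_\mu(\mathbf{w}) = \mathbb{E}_{S'}F_{S'}(\mathbf{w})$ together with the triangle inequality to split the symmetrized supremum into one piece depending on $S$ alone and one piece depending on $S'$ alone, as in (\ref{eq:gbound}). The mixed terms are then reabsorbed using Assumption \ref{as:bdpg} to replace the population gradient $\nabla F_\mu(\mathbf{J_t})$ by $\gamma\|\nabla F_S(\mathbf{J_t})\|$, and the sub-sample ratio $\frac{|U|\|\nabla F_U(\mathbf{J_t})\|}{n\|\nabla F_S(\mathbf{J_t})\|}$ is used to handle ``half-sums'' $g_{S_+}-g_{S_-}$ that arise in symmetrization. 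Tracking how these absorptions inflate the constants produces precisely the factor $\gamma'$, while the multiplicative factor $\mathbb{V}_m$ enters when lower-bounding $\mathbb{E}_\sigma\|g_{S_+}-g_{S_-}\|$ in terms of $\|\nabla F_S\|$. This is the technically delicate step and is where the apparent novelty of the paper lies.

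Finally, with the problem reduced to bounding $R_S(\mathcal{L}_{\mathbf{J}|S})$, I would exploit additivity across timesteps and use Jensen's inequality: $R_S(\{\mathbf{w}^{\mathrm{T}}\nabla f(\mathbf{J_t}) : \|\mathbf{w}\|\leq\delta_t\}) \leq \tfrac{\delta_t}{n}\sqrt{\sum_i \|\nabla f(\mathbf{J_t},z_i)\|^2}$. Recognizing, from the definition of $\Sigma(\mathbf{w})$, that $\tfrac{1}{n}\sum_i\|\nabla f(\mathbf{J_t},z_i)\|^2 = \operatorname{Tr}(\Sigma(\mathbf{J_t}))+\|\nabla F_S(\mathbf{J_t})\|^2$, and substituting $\delta_t = \eta_t\|\nabla F_S(\mathbf{J_t})\|$, each summand becomes $\tfrac{\eta_t\|\nabla F_S(\mathbf{J_t})\|}{\sqrt{n}}\sqrt{\operatorname{Tr}(\Sigma(\mathbf{J_t}))+\|\nabla F_S(\mathbf{J_t})\|^2}$. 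In the small learning rate regime, the one-step descent identity $\eta_t\|\nabla F_S(\mathbf{J_t})\|^2 \approx F_S(\mathbf{J_t}) - F_S(\mathbf{J_{t+1}})$ lets the sum telescope into the Stieltjes integral $-\int_t dF_S(\mathbf{J_t})$ weighted by the variance-correction factor $\sqrt{1+\operatorname{Tr}(\Sigma(\mathbf{J_t}))/\|\nabla F_S(\mathbf{J_t})\|^2}$. Combining this with the factor $2\gamma'\mathbb{V}_m$ from the symmetrization step and the $\mathcal{O}(\eta_m)$ remainder from the Taylor step yields the theorem.
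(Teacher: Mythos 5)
Your proposal is correct and follows essentially the same route as the paper: the same linear/nonlinear decomposition with the nonlinear remainder absorbed into $\mathcal{O}(\eta_m)$, the same reduction of $\mathbb{E}[\gen^{lin}(\mathbf{J_T})]$ to the data-dependent class $\mathcal{L}_{\mathbf{J}|S}$ handled by the modified symmetrization yielding the factor $2\gamma'\mathbb{V}_m$, and the same per-timestep Rademacher computation giving $\frac{\delta_t}{n}\sqrt{\sum_i\|\nabla f(\mathbf{J_t},z_i)\|^2}$, rewritten via $\operatorname{Tr}(\Sigma)$ and the small-learning-rate descent identity into the Stieltjes integral. The only nuance to note is that the descent identity should be taken in expectation over the gradient noise (as in the paper's Lemma on $R_S(\mathcal{L}_{\mathbf{J}|S})$), but this matches the paper's own level of rigor.
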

\begin{proof}
    We rewrite Equation \ref{eq:SGDupdate} of the update of SGD with batchsize $b$ here:
\begin{equation}
    \mathbf{J_t}=\mathbf{J_{t-1}}-\eta_t \nabla F_S(\mathbf{J_{t-1}})+\eta_t \epsilon_t
\end{equation}
where we simplify the $\epsilon(\mathbf{w_t})$ as $\epsilon_t$,
then we can expand the function at $f(\mathbf{J_T})$ as:
\begin{equation}
\begin{aligned}
    f^{lin}(\mathbf{J_T})  & \triangleq  f(\mathbf{J_0}) + \sum_{t=0}^{T-1} (\eta_t \nabla F_S(\mathbf{J_t})+\epsilon)^{\mathrm{T}} \nabla f(\mathbf{J_t})
    \\&= f(\mathbf{J_0}) + \sum_{t=0}^{K-1} \eta_t \nabla F_S(\mathbf{J_t})^{\mathrm{T}} \nabla f(\mathbf{J_t}) +\sum_{t=0}^{T-1} \epsilon_t^{\mathrm{T}} \nabla f(\mathbf{J_t})
\end{aligned}
\end{equation}
Note that \tcr{when} the learning rate is small, we have $f(\mathbf{J_T}) \approx  f^{lin}(\mathbf{J_T})$.

The difference between \tcr{the distributional} value and \tcr{the empirical} value of the \tcr{linear} function can be calculated as:

\begin{equation}
\begin{aligned}
     & \mathbb{E}[F_\mu(\mathbf{J_0}) + \sum_{t=0}^{T-1} (\eta_t \nabla F_S(\mathbf{J_t})+\epsilon)^{\mathrm{T}} \nabla F_\mu(\mathbf{J_t})]-\mathbb{E}[F_S(\mathbf{J_0}) + \sum_{t=0}^{T-1} (\eta_t \nabla F_S(\mathbf{J_t})+\epsilon)^{\mathrm{T}} \nabla F_S(\mathbf{J_t})] \\ & =  \mathbb{E}[F_\mu(\mathbf{J_0})-F_S(\mathbf{J_0}) + \sum_{t=0}^{T-1} (\eta_t \nabla F_S(\mathbf{J_t})+\epsilon)^{\mathrm{T}} \nabla F_\mu(\mathbf{J_t})-\sum_{t=0}^{T-1} (\eta_t \nabla F_S(\mathbf{J_t})+\epsilon)^{\mathrm{T}} \nabla F_S(\mathbf{J_t})]
    \\&= \mathbb{E}[\sum_{t=0}^{T-1} \eta_t \nabla F_S(\mathbf{J_t})^{\mathrm{T}} (\nabla F_\mu(\mathbf{J_t})-\nabla F_S(\mathbf{J_t}))+\sum_{t=0}^{T-1} \epsilon_t^{\mathrm{T}} (\nabla F_\mu(\mathbf{J_t})-\nabla F_S(\mathbf{J_t}))]]
    \\&\overset{\blacktriangle}{=} \mathbb{E}[\sum_{t=0}^{T-1} \eta_t \nabla F_S(\mathbf{J_t})^{\mathrm{T}} (\nabla F_\mu(\mathbf{J_t})- \nabla F_S(\mathbf{J_t}))]
    \\ & \leq \Phi(S,\mathcal{F}_{\mathbf{J}|S}),
\end{aligned}
\end{equation}
where $\blacktriangle$ using the equation that $\mathbb{E}[\epsilon_t^{\mathrm{T}} (\nabla F_\mu(\mathbf{J_t})-\nabla F_S(\mathbf{J_t}))]=0$, according to Equation \ref{eq:zeronoise}.

Because \tcr{of} $\mathbb{E}[F_\mu(\mathbf{J_T})-F_S(\mathbf{J_T})]=\mathbb{E}[F^{lin}_\mu(\mathbf{J_T})+\mathcal{O}(\eta_m)-F^{lin}_S(\mathbf{J_T})-\mathcal{O}(\eta_m)]=\mathbb{E}[F^{lin}_\mu(\mathbf{J_T})-F^{lin}_S(\mathbf{J_T})]+\mathcal{O}(\eta_m)$(from Proposition \ref{pp:gap}), by applying Lemma \ref{lm:mbd0} and Lemma \ref{lm:calcom}, the theorm is proved.

\end{proof}

\begin{corollary}
If function $f(\cdot)$ is $\beta$-smooth, under Assumption \ref{as:bdpg} given $S \sim \mu^{n}$, let $\mathbf{J}=\mathcal{A}(S)$, $\eta_t=\frac{c}{\beta (t+1)}$, $ M^2_2 = \max\limits_t \mathbb{E}_{t-1}(\| \nabla F_S(\mathbf{J_t})+\epsilon (\mathbf{J_t}) \|^2 )$ and $ M^4_4 = \max\limits_t \mathbb{E}_{t-1}(\| \nabla F_S(\mathbf{J_t})+\epsilon (\mathbf{J_t}) \|^4 )$ , where $\mathcal{A}$ denoted the SGD or GD algorithm training with $T$ steps, we have:
\begin{equation}
    \begin{aligned}
        \mathbb{E}[F_\mu(\mathbf{J_T})-F_S(\mathbf{J_T})] \leq & - 2 \gamma' \mathbb{V}_m \mathbb{E}\int _t \frac{d F_S(\mathbf{J_t})}{\sqrt{n}} \sqrt{1+\frac{\operatorname{Tr}(\Sigma(\mathbf{J_t}))}{\| \nabla F_S(\mathbf{J_t}) \|_2^2}}\\& + 2 c^2 \gamma' \mathbb{V}_m M_4^2 \sqrt{ \mathbb{E} \int_t \frac{dt}{n \beta^2  (t+1)^4} \left(1+\frac{\operatorname{Tr}(\Sigma(\mathbf{J_t}))}{\| \nabla F_S(\mathbf{J_t}) \|_2^2}\right)}
        \\&+ 2 c^2 \frac{M_2^2}{ \beta}.
    \end{aligned}
\end{equation}
where $\mathbb{V}(\mathbf{w})=\frac{\Vert \nabla F_S(\mathbf{w})\Vert}{\mathbb{E}_{U\subset S} \Vert \frac{|U|}{n}\nabla F_U(\mathbf{w})-\frac{|S|-|U|}{n}\nabla F_{S/U}(\mathbf{w}) \Vert}$, $\mathbb{V}_m\!=\!\max \limits_t\mathbb{V}(\mathbf{J_t})$ and $\gamma'\!=\!\max \{1, \max \limits_{U\subset S; t} \frac{|U|\Vert \nabla F_U(\mathbf{J_t}) \Vert}{n\Vert \nabla F_S(\mathbf{J_t})\Vert}\}\gamma$.
\end{corollary}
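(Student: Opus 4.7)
The plan is to refine the proof of Theorem \ref{thm:mainbd} by replacing the opaque $\mathcal{O}(\eta_m)$ term with an explicit bound obtained from $\beta$-smoothness and the specific schedule $\eta_t = c/(\beta(t+1))$. Start from the decomposition $\mathbb{E}[F_\mu(\mathbf{J_T})-F_S(\mathbf{J_T})] = \mathbb{E}[\genl]+\mathbb{E}[\gennl]$ in Equation (\ref{eq:step_linear_average_ap}). For the linear part, apply Lemmas \ref{lm:mbd0} and \ref{lm:calcom} to obtain
\begin{equation*}
\mathbb{E}[\genl] \leq 2\gamma'\mathbb{V}_m\,\mathbb{E}\sum_{t=0}^{T-1}\frac{\eta_t\|\nabla F_S(\mathbf{J_t})\|^2}{\sqrt{n}}\sqrt{1+\frac{\operatorname{Tr}(\Sigma(\mathbf{J_t}))}{\|\nabla F_S(\mathbf{J_t})\|^2}} .
\end{equation*}
Rather than using the loose approximation $\eta_t\|\nabla F_S\|^2 \approx -dF_S(\mathbf{J_t})$, I would write it as an exact identity with a controlled remainder. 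Using $\mathbf{J_{t+1}}-\mathbf{J_t} = -\eta_t(\nabla F_S(\mathbf{J_t})-\epsilon_t)$, the zero-mean property of $\epsilon_t$ from Equation (\ref{eq:zeronoise}), and $\beta$-smoothness of $F_S$, one gets $\eta_t\|\nabla F_S(\mathbf{J_t})\|^2 = -\mathbb{E}_{t-1}[F_S(\mathbf{J_{t+1}})-F_S(\mathbf{J_t})] + R_t$ with $|R_t| \leq \tfrac{\beta}{2}\eta_t^2\,\mathbb{E}_{t-1}\|\nabla F_S(\mathbf{J_t})-\epsilon_t\|^2$.

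Substituting this identity splits the bound on $\mathbb{E}[\genl]$ into the first (trajectory) term of the corollary and an error sum $2\gamma'\mathbb{V}_m \sum_t R_t \sqrt{1+\operatorname{Tr}(\Sigma)/\|\nabla F_S\|^2}/\sqrt{n}$. To convert this error sum into the stated middle term, I would apply Cauchy--Schwarz in two stages: first to pull $\mathbb{E}\|\nabla F_S-\epsilon\|^4 \leq M_4^4$ apart from the trajectory-dependent factor $\mathbb{E}(1+\operatorname{Tr}(\Sigma)/\|\nabla F_S\|^2)$, and second to move the square root outside the sum in $t$. Substituting $\eta_t^4 = c^4/(\beta^4(t+1)^4)$ and absorbing constants recovers $2c^2\gamma'\mathbb{V}_m M_4^2\sqrt{\mathbb{E}\int_t dt/(n\beta^2(t+1)^4)\,(1+\operatorname{Tr}(\Sigma(\mathbf{J_t}))/\|\nabla F_S(\mathbf{J_t})\|^2)}$.

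For the nonlinear remainder $\gennl$, refine the bound of Proposition \ref{pp:gap} using $\beta$-smoothness directly: each Taylor remainder $|F_\mu(\mathbf{J_t})-F_\mu(\mathbf{J_{t-1}})-(\mathbf{J_t}-\mathbf{J_{t-1}})^\mathrm{T}\nabla F_\mu(\mathbf{J_{t-1}})|$ and its $F_S$ analogue are at most $\tfrac{\beta}{2}\|\mathbf{J_t}-\mathbf{J_{t-1}}\|^2$, so $|(ii)^{nl}_t| \leq \beta\eta_{t-1}^2\|\nabla F_S(\mathbf{J_{t-1}})-\epsilon_{t-1}\|^2$. Summing, taking expectation, and using the definition of $M_2^2$ gives $\mathbb{E}|\gennl| \leq \beta M_2^2 \sum_{t\geq 1}\eta_{t-1}^2 = M_2^2 c^2/\beta \cdot \sum_{t\geq 1} 1/t^2 \leq 2c^2 M_2^2/\beta$ via the Basel-type bound $\sum_{t\geq 1} t^{-2} = \pi^2/6 < 2$. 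This yields exactly the third term.

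The main obstacle will be the middle term: getting the constants and the placement of the square root right requires careful Cauchy--Schwarz pairings, and it is easy to accidentally produce $M_4^4$ or $M_2^2$ in the wrong location, or to lose a factor of $\beta$ when converting $\sum_t \eta_t^4$ into the continuous integral $\int_t dt/(\beta^2(t+1)^4)$. The linear and nonlinear parts individually are straightforward applications of already-proved lemmas plus $\beta$-smooth Taylor bounds, but tuning the intermediate inequalities so the three pieces combine with exactly the stated constants $2\gamma'\mathbb{V}_m$, $2c^2\gamma'\mathbb{V}_m$, and $2c^2/\beta$ is the delicate bookkeeping step.
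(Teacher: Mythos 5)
Your proposal follows essentially the same route as the paper's proof: the same three-way split into the main trajectory term, a smoothness-controlled remainder from replacing the linearized decrement $\eta_t\|\nabla F_S(\mathbf{J_t})\|^2$ by the actual change in $F_S$ (the paper's term $(A)$), and the nonlinear part (the paper's term $(B)$), with the same conditional H\"older/Cauchy--Schwarz step extracting $M_4^2$ and the schedule $\eta_t=c/(\beta(t+1))$ for the middle term. The only cosmetic difference is in the last term, where you bound $\sum_t 1/(t+1)^2$ by the Basel constant while the paper telescopes; both give the stated $2c^2M_2^2/\beta$.
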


\begin{proof}
    If $f(\cdot)$ is $\beta$-smooth, we have:
    \begin{gather}
    \label{eq:smooth}
        f(\mathbf{J_{t+1}})-f(\mathbf{J_t})\leq(\mathbf{J_{t+1}}-\mathbf{J_t})^{\mathrm{T}}\nabla f(\mathbf{J_t}) +\frac{1}{2}\beta\Vert \mathbf{J_{t+1}-\mathbf{J_t}}\Vert^2
        \\ f(\mathbf{J_{t+1}})-f(\mathbf{J_t})\geq(\mathbf{J_{t+1}}-\mathbf{J_t})^{\mathrm{T}}\nabla f(\mathbf{J_t}) -\frac{1}{2}\beta\Vert \mathbf{J_{t+1}-\mathbf{J_t}}\Vert^2.
    \end{gather}
    Combining the two equations, we obtain:
    \begin{equation}
        \label{eq:gap_dF_dFlin}
        \begin{aligned}
            \vert R_\mu(\mathbf{J_T})-R_S(\mathbf{J_T}) \vert & \leq  | F_\mu(\mathbf{J_T}) - F_\mu^{lin}(\mathbf{J_T})| + | F_S(\mathbf{J_T}) - F_S^{lin}(\mathbf{J_T})|
            \\ & \leq \frac{\beta}{2}\sum_{t=0}^{T-1} \Vert \mathbf{J_{t+1}-\mathbf{J_t}} \Vert^2 + \frac{\beta}{2}\sum_{t=0}^{T-1} \Vert \mathbf{J_{t+1}-\mathbf{J_t}} \Vert^2
            \\ & = \beta\sum_{t=0}^{T-1} \Vert \mathbf{J_{t+1}-\mathbf{J_t}} \Vert^2.
        \end{aligned} 
    \end{equation}
    
    The generalization error can be divided into three parts:
    \begin{equation}
        \label{eq:beta}
        \begin{aligned}
            \mathbb{E}[F_\mu(\mathbf{J_T})-F_S(\mathbf{J_T})] \leq & - 2 \gamma' \mathbb{V}_m \mathbb{E}\int _t \frac{d F_S(\mathbf{J_t})}{\sqrt{n}} \sqrt{1+\frac{\operatorname{Tr}(\Sigma(\mathbf{J_t}))}{\| \nabla F_S(\mathbf{J_t}) \|_2^2}}
            \\& \underbrace{- 2 \gamma' \mathbb{V}_m \mathbb{E}\int _t \frac{d F^{lin}_S(\mathbf{J_t})-d F_S(\mathbf{J_t})}{\sqrt{n}} \sqrt{1+\frac{\operatorname{Tr}(\Sigma(\mathbf{J_t}))}{\| \nabla F_S(\mathbf{J_t}) \|_2^2}}}_{(A)}
            \\&+ \underbrace{\beta\mathbb{E}\sum_{t=0}^{T-1} \Vert \mathbf{J_{t+1}-\mathbf{J_t}} \Vert^2}_{(B)}.
        \end{aligned}
    \end{equation}
    The term“$(A)$” is caused by using $ F_S(\mathbf{J_{t+1}})- F_S(\mathbf{J_t})$ to replace $F^{lin}_S(\mathbf{J_{t+1}})-  F^{lin}_S(\mathbf{J_t})$. The term "$(B)$" is induced by $\gennl$. Then, we want to give a upper bound of $(A)$ using $M_4^4$:
    \begin{equation}
        \begin{aligned}
            (A)& \overset{(\star)}{\leq}  2 \gamma' \mathbb{V}_m \mathbb{E}\sum_{t=0}^{T-1} \frac{\beta\| \mathbf{J_{t+1}}-\mathbf{J_t} \|^2}{\sqrt{n}} \sqrt{1+\frac{\operatorname{Tr}(\Sigma(\mathbf{J_t}))}{\| \nabla F_S(\mathbf{J_t}) \|_2^2}}
            \\ & \overset{(\star\star)}{\leq}  2c^2 \gamma' \mathbb{V}_m \mathbb{E}\sum_{t=0}^{T-1} \frac{\| \nabla F_S(\mathbf{J_t}) +\epsilon(\mathbf{J_t})\|^2}{\beta \sqrt{n} (t+1)^2} \sqrt{1+\frac{\operatorname{Tr}(\Sigma(\mathbf{J_t}))}{\| \nabla F_S(\mathbf{J_t}) \|_2^2}}
            \\  & =  2 c^2\gamma' \mathbb{V}_m \sum_{t=0}^{T-1} \mathbb{E}_{t-1} \frac{\| \nabla F_S(\mathbf{J_t}) +\epsilon(\mathbf{J_t})\|^2}{\beta \sqrt{n} (t+1)^2} \sqrt{1+\frac{\operatorname{Tr}(\Sigma(\mathbf{J_t}))}{\| \nabla F_S(\mathbf{J_t}) \|_2^2}}
            \\& \overset{(\star \star \star)}{\leq} 2c^2 \gamma' \mathbb{V}_m \sum_{t=0}^{T-1} \sqrt{  \frac{\mathbb{E}_{t-1} \| \nabla F_S(\mathbf{J_t}) +\epsilon(\mathbf{J_t})\|^4}{\beta^2 n (t+1)^4} \mathbb{E}_{t-1}\left(1+\frac{\operatorname{Tr}(\Sigma(\mathbf{J_t}))}{\| \nabla F_S(\mathbf{J_t}) \|_2^2}\right)}
            \\ & \leq 2c^2 \gamma' \mathbb{V}_m \sum_{t=0}^{T-1} \sqrt{  \frac{M_4^4}{\beta^2 n (t+1)^4} \mathbb{E}_{t-1}\left(1+\frac{\operatorname{Tr}(\Sigma(\mathbf{J_t}))}{\| \nabla F_S(\mathbf{J_t}) \|_2^2}\right)}
            \\ & \leq 2c^2 \gamma' \mathbb{V}_m M_4^2 \sqrt{ \mathbb{E} \sum_{t=0}^{T-1} \frac{1}{\beta^2 n (t+1)^4} \left(1+\frac{\operatorname{Tr}(\Sigma(\mathbf{J_t}))}{\| \nabla F_S(\mathbf{J_t}) \|_2^2}\right)}.
        \end{aligned}
    \end{equation}
    where $(\star)$ is due to the Equation \ref{eq:gap_dF_dFlin}, $(\star\star)$ is due to the update rule of $\mathbf{J_t}$ and $(\star\star\star)$ is que to  Hölder's inequality.
    In the following, we use $M_2^2$ to give a upper bound for $(B)$:
    \begin{equation}
        \begin{aligned}
            (B) & \leq \frac{c^2}{\beta}\sum_{t=0}^{T-1}\frac{1}{ (t+1)^2}\mathbb{E} \| \nabla F(\mathbf{J_t}) + \epsilon(\mathbf{J_t})\|^2
            \\ & \leq \frac{c^2}{\beta}\sum_{t=0}^{T-1}\frac{1}{ (t+1)^2} M_2^2
            \\ & \leq \frac{c^2}{\beta} \left( M_2^2 +\sum_{t=1}^{T-1}\frac{1}{ (t+1)^2} M_2^2 \right)
            \\ & \leq \frac{c^2}{\beta} \left( M_2^2 +\sum_{t=1}^{T-1} \left ( \frac{1}{ t}-\frac{1}{ t+1} \right) M_2^2 \right)
            \\ & \leq \frac{c^2}{\beta} \left( 2M_2^2 - \frac{1}{T}  M_2^2 \right)
            \\ &\leq    2 c^2 \frac{ M_2^2}{ \beta}
        \end{aligned}
    \end{equation}
    Taking the upper bound value of "(A)" and "(B)" into Equation \ref{eq:beta}, we obtain the result.
    
\end{proof}

\section{Relaxed Assumption and Corresponding Bound}
\label{subsec:relax}

\begin{assumption} 
\label{as:relax}
There is a value $\gamma$, $T_0$ and $ \zeta$, so that for all $\mathbf{w} \in \{\mathbf{J_t}|  t\in \mathbb{N} \ \wedge \ t < T_0 \}$, we have $\Vert \nabla F_\mu(\mathbf{w}) \Vert \leq \gamma \Vert \nabla F_S(\mathbf{w}) \Vert$ and for all $\mathbf{w} \in \{\mathbf{J_t}|  t\in \mathbb{N} \ \wedge \ t\geq T_0 \}$, we have $\Vert \nabla F_\mu(\mathbf{w}) \Vert \leq \gamma \Vert \nabla F_S(\mathbf{w}) \Vert+\zeta$.
\end{assumption}

\begin{theorem}
\label{thm:relax}
    Under Assumption \ref{as:relax}, given $S \sim \mu^{n}$, let $\mathbf{J}=\mathcal{A}(S)$, where $\mathcal{A}$ denotes the SGD or GD algorithm training with $T$ steps, we have:
\begin{equation}
        \mathbb{E}[F_\mu(\mathbf{J_T})-F_S(\mathbf{J_T})] \leq - 2 \gamma' \mathbb{V}_m \mathbb{E}\int _t \frac{d F_S(\mathbf{J_t})}{\sqrt{n}} \sqrt{1+\frac{\operatorname{Tr}(\Sigma(\mathbf{J_t}))}{\| \nabla F_S(\mathbf{J_t}) \|_2^2}}+\frac{1}{2}\sum_{t=T_0}^{T} \eta_t \Vert  \nabla F_S(\mathbf{J_t}) \Vert  \zeta+\mathcal{O}(\eta_m),
\end{equation}
where $\mathbb{V}(\mathbf{w})=\frac{\Vert \nabla F_S(\mathbf{w})\Vert}{\mathbb{E}_{U\subset S} \Vert \frac{|U|}{n}\nabla F_U(\mathbf{w})-\frac{n-|U|}{n}\nabla F_{S/U}(\mathbf{w}) \Vert}$, $\mathbb{V}_m=\max_t\mathbb{V}(\mathbf{J_t})$ and $\gamma'=\max \{1, \max \limits_{U\subset S; t} \frac{|U|\Vert \nabla F_U(\mathbf{J_t}) \Vert}{n\Vert \nabla F_S(\mathbf{J_t})\Vert}\}\gamma$.
\end{theorem}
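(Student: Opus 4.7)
The plan is to follow the structure of the proof of Theorem \ref{thm:mainbd} in Appendix \ref{sc:pfbound} verbatim, and isolate the single inequality in the proof of Lemma \ref{lm:mbd0} at which Assumption \ref{as:bdpg} enters, so that the weaker Assumption \ref{as:relax} can be substituted there. The decomposition $\mathbb{E}[F_\mu(\mathbf{J_T})-F_S(\mathbf{J_T})] = \mathbb{E}[\genl] + \mathbb{E}[\gennl]$ from Equation (\ref{eq:step_linear_average_ap}), and the control $\mathbb{E}[\gennl] = \mathcal{O}(\eta_m)$ from Proposition \ref{pp:gap}, do not invoke Assumption \ref{as:bdpg} and therefore transfer unchanged, supplying the $\mathcal{O}(\eta_m)$ term in the bound. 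The remaining work is to re-derive the upper bound on $\mathbb{E}[\genl]$.

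I would mimic the symmetrization chain (\ref{eq:gbound}) and (\ref{eq:rdieq}) that proves Lemma \ref{lm:mbd0}. The only step that uses the original Assumption \ref{as:bdpg} is the triangle-inequality bound on $\|\,|S_+|\nabla F_\mu(\mathbf{J_t}) - g_{S_-}(\mathbf{J_t})\|$, which in the original argument collapses to $\delta_t \gamma' \|\nabla F_S(\mathbf{J_t})\|$ after invoking $\|\nabla F_\mu\| \leq \gamma\|\nabla F_S\|$ together with the definition of $\gamma'$. Under Assumption \ref{as:relax} I would split the sum over $t$ at $t = T_0$. For $t < T_0$ the inequality transfers verbatim. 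For $t \geq T_0$, substituting $\|\nabla F_\mu(\mathbf{J_t})\| \leq \gamma \|\nabla F_S(\mathbf{J_t})\| + \zeta$ into the triangle inequality introduces an extra additive contribution $|S_+|\,\zeta$. Dividing by $n$, using $\mathbb{E}_\sigma[|S_+|] = n/2$ and $\delta_t = \eta_t \|\nabla F_S(\mathbf{J_t})\|$ then yields the additional term $\tfrac{1}{2}\sum_{t=T_0}^{T} \eta_t \|\nabla F_S(\mathbf{J_t})\|\,\zeta$ on the appropriate side of the Rademacher inequality. Combined with the unchanged bound on $R_S(\mathcal{L}_{\mathbf{J}|S})$ supplied by Lemma \ref{lm:calcom} and the $\mathcal{O}(\eta_m)$ residual from Proposition \ref{pp:gap}, this delivers the stated bound.

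The main technical obstacle I anticipate is bookkeeping of the Rademacher expectation so that the coefficient in front of $\sum_{t=T_0}^{T}\eta_t\|\nabla F_S(\mathbf{J_t})\|\zeta$ comes out exactly as stated; in particular one has to interact the extra $|S_+|\zeta$ with both the symmetrization factor of two from (\ref{eq:gbound}) and the averaging $\mathbb{E}_\sigma[|S_+|/n] = 1/2$ without double-counting. A secondary, easier point is that $\gamma'$ and $\mathbb{V}_m$ are already defined via a maximum over the full trajectory, so they dominate both the pre-$T_0$ and post-$T_0$ segments simultaneously and do not need to be re-defined after the split at $T_0$.
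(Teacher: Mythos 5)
Your proposal matches the paper's own proof essentially verbatim: the paper likewise keeps the entire argument of Appendix~\ref{sc:pfbound} unchanged (Proposition~\ref{pp:gap} supplying the $\mathcal{O}(\eta_m)$ term, Lemma~\ref{lm:calcom} unchanged) and only replaces the chain in Equation~(\ref{eq:rdieq}), splitting at $t=T_0$ and using $\|\nabla F_\mu(\mathbf{J_t})\|\le\gamma\|\nabla F_S(\mathbf{J_t})\|+\zeta$ there, so that the extra $\tfrac{1}{n}|S_+|\zeta$ contribution with $\mathbb{E}_\sigma[|S_+|]=n/2$ and $\delta_t=\eta_t\|\nabla F_S(\mathbf{J_t})\|$ produces exactly the additional term $\tfrac{1}{2}\sum_{t=T_0}^{T}\eta_t\|\nabla F_S(\mathbf{J_t})\|\,\zeta$. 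The bookkeeping worry you raise about the symmetrization factor of two is resolved in the paper exactly as you suggest, by absorbing the residual at the level of the modified inequality rather than re-deriving the constants.
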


\begin{proof}
Most of the proofs in this part are the same as those in Appendix \ref{sc:pfbound}, except for Equation \ref{eq:rdieq}. The Equation \ref{eq:rdieq} is replaced by: 

\begin{equation}
\begin{aligned}
&k\mathbb{E}_{\sigma}\sup\limits_{h\in \mathcal{L}_{\mathbf{J}|S}}(\frac{1}{n}\sum_i^n\sigma_i h(z_i) )-\mathbb{E}_{S',\sigma}\sup\limits_{h\in \mathcal{F}_{\mathbf{J}|S}}(\frac{1}{n}\sum_i^n\sigma_i(h(z_i))) 
\\&=k\mathbb{E}_{\sigma}(\frac{1}{n}\sum_{t=0}^\mathrm{T} \delta_t \Vert g_{S_+}(\mathbf{J_t})-g_{S_-}(\mathbf{J_t})\Vert)-\mathbb{E}_{S',\sigma}(\frac{1}{n}\sum_{t=0}^{T} \delta_t \frac{g_S(\mathbf{J_t})}{\Vert g_S(\mathbf{J_t})\Vert} ( g_{S'_+}(\mathbf{J_t})-g_{S_-}(\mathbf{J_t})))
\\ &=k\mathbb{E}_{\sigma}(\frac{1}{n}\sum_{t=0}^\mathrm{T} \delta_t \Vert g_{S_+}(\mathbf{J_t})-g_{S_-}(\mathbf{J_t})\Vert-\frac{1}{n}\sum_{t=0}^{T} \delta_t \frac{g_S(\mathbf{J_t})}{\Vert g_S(\mathbf{J_t})\Vert} (|S_+| \nabla F_\mu(\mathbf{J_t})-g_{S_-}(\mathbf{J_t})))
\\ &\geq k \mathbb{E}_{\sigma}(\frac{1}{n}\sum_{t=0}^\mathrm{T} \delta_t \Vert g_{S_+}(\mathbf{J_t})-g_{S_-}(\mathbf{J_t})\Vert-\frac{1}{n}\sum_{t=0}^{T} \delta_t   \Vert |S_+| \nabla F_\mu(\mathbf{J_t})-g_{S_-}(\mathbf{J_t}) \Vert) 
\\ & \geq k \frac{1}{n}\sum_{t=0}^\mathrm{T} \delta_t \mathbb{E}_{\sigma}(\Vert g_{S_+}(\mathbf{J_t})-g_{S_-}(\mathbf{J_t})\Vert)-\sum_{t=0}^{T} \delta_t  \gamma' \Vert  \nabla F_S(\mathbf{J_t}) \Vert - \frac{1}{2}\sum_{t=T_0}^{T} \delta_t  \zeta
\\& \geq - \frac{1}{2}\sum_{t=T_0}^{T} \eta_t \Vert  \nabla F_S(\mathbf{J_t}) \Vert  \zeta.
\end{aligned}    
\end{equation}
\end{proof}

\begin{remark}
Compared of Theorem \ref{thm:mainbd}, we have a extra term $\sum_{t=T_0}^{T} \eta_t \Vert  \nabla F_S(\mathbf{J_t}) \Vert  \zeta$ here. Since the unrelaxed assumption $\Vert \nabla F_\mu(\mathbf{w}) \Vert \leq \gamma \Vert \nabla F_S(\mathbf{w}) \Vert$ is not satisfied only when $\Vert \nabla F_S(\mathbf{w}) \Vert$ is relative small, the term $\sum_{t=T_0}^{T} \eta_t \Vert  \nabla F_S(\mathbf{J_t}) \Vert  \zeta$ is small value.
\end{remark}

\section{Experiments}

\subsection{Calculation of $\mathcal{C}(\mathbf{J})$}

\def\ns{n_{\operatorname{sp}}}
\def\xt{\mathbf{X_t}}
\def\xtm{\mathbf{X_{t-1}}}
\def\zs{z^{\operatorname{sp}}}
To reduce the calculation, we construct a randomly sampled subset $S_{\operatorname{sp}}=\{\zs_1, ...,\zs_n \} \subset S$.

\begin{equation*}
\begin{aligned}
    \int _t \frac{d F_S(\mathbf{J_t})}{\sqrt{n}} \sqrt{1+\frac{\operatorname{Tr}(\Sigma(\mathbf{J_t}))}{\| \nabla F_S(\mathbf{J_t}) \|_2^2}}&=\int _t \frac{d F_S(\mathbf{J_t})}{\sqrt{n}} \sqrt{\frac{\sum_{i=1}^{n}\left\|\nabla f(\mathbf{J_t},z_i)\right\|_2^2}{n}\frac{1}{\| \nabla F_S(\mathbf{J_t}) \|_2^2}} \\ &\approx \int _t \frac{d F_S(\mathbf{J_t})}{\sqrt{n}} \sqrt{\frac{\sum_{i=1}^{\ns}\left\|\nabla f(\mathbf{J_t},\zs_i)\right\|_2^2}{\ns}\frac{1}{\| \nabla F_S(\mathbf{J_t}) \|_2^2}} 
\end{aligned}
\end{equation*}
Denote the weights after $t$-epoch training as $\xt$. We can roughly calculated $\mathcal{C}(\mathbf{J_t})$
\begin{equation*}
    \sum _{t=1}^{T} \frac{ F_S(\xt)-F_S(\mathbf{\xtm})}{\sqrt{n}} \sqrt{\frac{\sum_{i=1}^{\ns}\left\|\nabla f(\mathbf{\xt},\zs_i)\right\|_2^2}{\ns}\frac{1}{\| \nabla F_S(\mathbf{\xt}) \|_2^2}}   
\end{equation*}
\subsection{Experimental Details}
\label{subsec:expdetail}
Here, we give a detail setting of the experiment for each figure.

Figure \ref{fig:ass} \quad The  learning rate is fixed to 0.05 during all the training process. The batch size is 256. All experiments is trained with 100 epoch.  The test accuracy for CIFAR-10, CIFAR-100, and SVHN are 87.64\%, 55.08\%, and 92.80\%, respectively.

Figure \ref{fig:training} \quad The initial learning rate is set to 0.05 with \tcr{the batch size of} 1024. We use the Cosine Annealing LR Schedule to adjust the learning rate during training.

Figure \ref{fig:nlr} \quad Each point is an average of three repeated experiments. We stop training when the training loss is small than 0.2.

\subsection{Experimental exploration of $\gennl$}
\label{subsec:effectlaf}
In this section, our aim is to investigate the conditions under which $\gennl \approx 0$. Since directly calculating the difference $\vert R_\mu(\mathbf{J_T})-R_S(\mathbf{J_T}) \vert$ is challenging, we concentrate on the upper bound value $\vert R_\mu(\mathbf{J_T}) \vert + \vert R_S(\mathbf{J_T}) \vert$.

We conduct the experiment using cifar10-5k dataset and fc-tanh network, following the setting of paper \citep{cohen2021gradient}. Cifar10-5k\citep{cohen2021gradient} is a subset of cifar10 dataset. Building upon the work of \cite{ahn2022understanding}, we compute the Relative Progress Ratio (RP) and Test Relative Progress Ratio (TRP) throughout the training process. We initially consider the case of gradient descent. The definitions of RP and TRP for gradient descent are as follows:
\begin{gather}
    \operatorname{RP}(\mathbf{J_t})\triangleq \frac{F_S(\mathbf{J_{t+1}})-F_S(\mathbf{J_{t}})}{\eta \| \nabla F_S(\mathbf{J_t}) \|^2} \\
    \operatorname{TRP}(\mathbf{J_t})\triangleq \frac{F_{S'}(\mathbf{J_{t+1}})-F_{S'}(\mathbf{J_{t}})}{\eta  \nabla F_{S}(\mathbf{J_t})^{\mathrm{T}}\nabla F_{S'}(\mathbf{J_t}) }.
\end{gather}
Therefore, we have:
\begin{equation}
    \label{eq:training_TRP}
    \begin{aligned}
        & F_S(\mathbf{J_0})+\sum_{t=1}^T (F_S(\mathbf{J_t})-F_S(\mathbf{J_{t-1}}))-F_S(\mathbf{J_0})-\sum_{t=0}^{T-1} (\mathbf{J_t}-\mathbf{J_{t-1}})^{\mathrm{T}}\nabla F_S(\mathbf{J_{t-1}})
        \\ &=\sum_{t=1}^T \left[(F_S(\mathbf{J_t})-F_S(\mathbf{J_{t-1}}))- (\mathbf{J_t}-\mathbf{J_{t-1}})^{\mathrm{T}}\nabla F_S(\mathbf{J_{t-1}})\right]
        \\&= \sum_{t=1}^T \left[(F_S(\mathbf{J_t})-F_S(\mathbf{J_{t-1}}))+ \eta \| \nabla F_S(\mathbf{J_{t-1}}) \|^2 \right]
        \\&= \sum_{t=1}^T \left[ \eta_t (1+\operatorname{RP}(\mathbf{J_{t-1}})) \| \nabla F_S(\mathbf{J_{t-1}}) \|^2 \right]
    \end{aligned}
\end{equation}

Following the same way, we have:
\begin{equation}
\label{eq:test_TRP}
\begin{aligned}
    & F_{S'}(\mathbf{J_0})+\sum_{t=1}^T (F_{S'}(\mathbf{J_t})-F_{S'}(\mathbf{J_{t-1}}))-F_{S'}(\mathbf{J_0})-\sum_{t=0}^{T-1} (\mathbf{J_t}-\mathbf{J_{t-1}})^{\mathrm{T}}\nabla F_{S'}(\mathbf{J_{t-1}}) 
    \\&= \sum_{t=1}^T \left[ \eta_t (1+\operatorname{TRP}(\mathbf{J_{t-1}})) \nabla F_{S}(\mathbf{J_{t-1}})^{\mathrm{T}}\nabla F_{S'}(\mathbf{J_{t-1}}) \right]
\end{aligned}
\end{equation}

\begin{figure}[h]
    \centering
    \includegraphics[width=0.55 \textwidth]{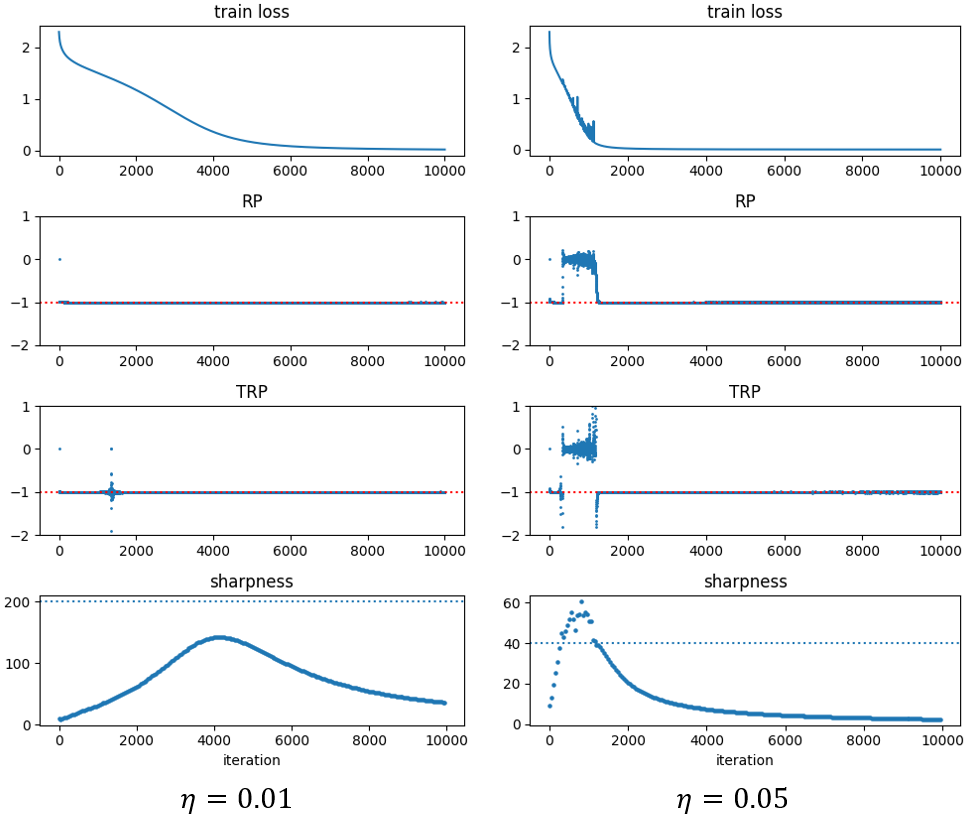}
    \caption{\textbf{Exploration of $\gennl$ on Gradient descent}. Experiments is conducted on cifar10-5k dataset with cross entropy loss. The blue dash line in fourth row denotes $\frac{2}{\eta}$. Gradient descent enter the EoS regime when the sharpness is above $\frac{2}{\eta}$. Both RP and TRP have values around -1 when sharpness is below the $\frac{2}{\eta}$.}
    \label{fig:gd}
\end{figure}

Combining Equation (\ref{eq:training_TRP}) and Equation (\ref{eq:test_TRP}), we have:
\begin{equation}
    \begin{aligned}
        &|\gennl| \leq \\ & \sum_{t=1}^T \eta_t \left[  (1+\operatorname{TRP}(\mathbf{J_{t-1}})) | \nabla F_{S}(\mathbf{J_{t-1}})^{\mathrm{T}}\nabla F_{S'}(\mathbf{J_{t-1}})| + (1+\operatorname{RP}(\mathbf{J_{t-1}})) \| \nabla F_S(\mathbf{J_{t-1}}) \|^2 \right]
    \end{aligned}
\end{equation}
Therefore, if we have for all $t$,$\operatorname{RP}(\mathbf{J_t}) \approx -1$ and $\operatorname{TRP}(\mathbf{J_t}) \approx -1$, then $\vert \gennl \vert \approx 0$.

From Figure \ref{fig:gd} we find that in stable regime, where the sharpness is below the $\frac{2}{\eta}$, we have $\operatorname{TRP}\approx \operatorname{RP} \approx -1$.
 Under small learning rate, the gradient descent doesn't enter the regime of edge of stability and we have $\operatorname{TRP}\approx \operatorname{RP} \approx -1$ during whole training process and $\gennl \approx 0$.

Next, we consider the case of Stochastic Gradient Descent (SGD). Due to the stochastic estimation of the gradient, we need to rely on some approximations. Let $\mathbf{X_t^i}$ represent the weights after the $t$-epoch and $i$-th iteration of training. We assume a constant learning rate $\eta$ for SGD. The gradient is approximated as follows:
\begin{equation}
    \eta \nabla F_S(\mathbf{X_t^i})\approx \frac{B}{n}(\xt-\mathbf{X_{t+1}} )=\frac{B}{n}\sum_{i=1}^{\frac{n}{B}} \nabla F_S(\mathbf{X_t^i}),
\end{equation}
and we appximate $\nabla F_{S'}(\mathbf{X_t^i})$ as:
\begin{equation}
    \eta \nabla F_S(\mathbf{X_t^i})\approx \eta \nabla F_S(\mathbf{X_t}).
\end{equation}
Therefore, we have:
\begin{gather}
    \operatorname{RP}(\mathbf{X_t})\approx \frac{\eta(F_S(\mathbf{X_{t+1}})-F_S(\mathbf{X_{t}}))}{\Vert \mathbf{X_{t+1}}-\xt \Vert} \\
    \operatorname{TRP}(\mathbf{X_t})\approx \frac{F_{S'}(\mathbf{X_{t+1}})-F_{S'}(\mathbf{X_{t}})}{(\xt-\mathbf{X_{t+1}})^{\mathrm{T}}\nabla F_{S'}(\mathbf{X_t}) }.
\end{gather}

\label{sec:SGD}
\begin{figure}[h]
    \centering
    \includegraphics[width=0.75 \textwidth]{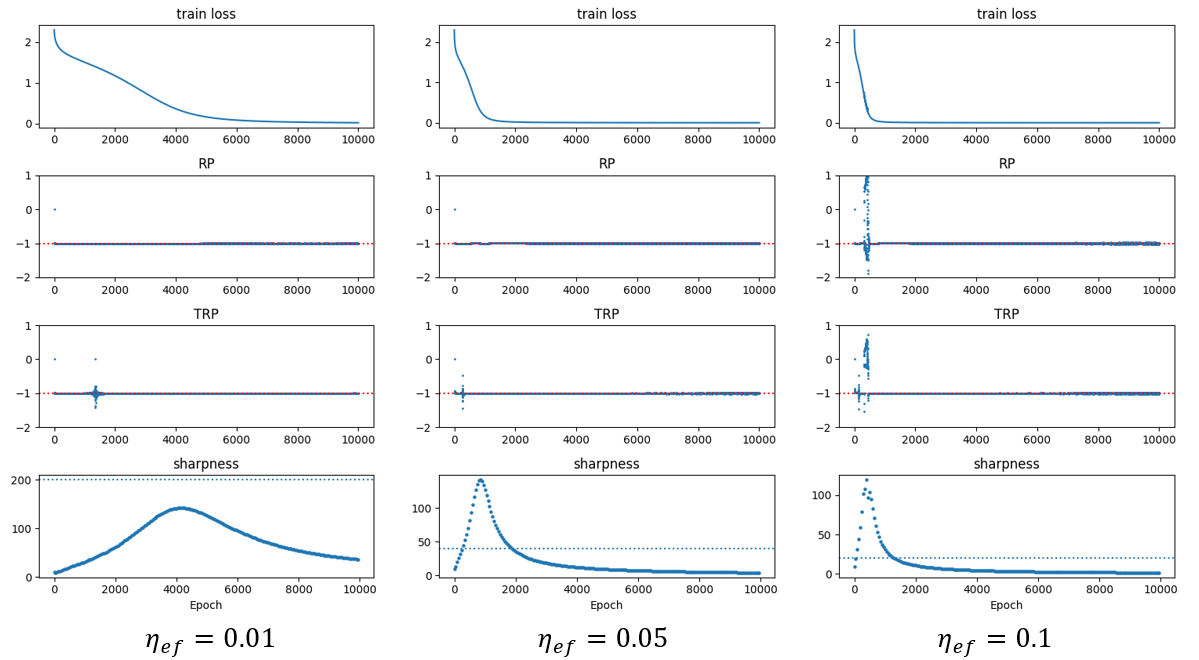}
    \caption{\textbf{Exploration of $\gennl$ on SGD case.} Here, the effective learning rate is defined as $\eta_{ef}\triangleq \frac{n}{B}\eta $. We still have $\gennl\approx 0$ under small learning rate.}
    \label{fig:sgd}
\end{figure}

We calculated the effect learning rate for SGD as $\eta_{ef}\triangleq \frac{n}{B}\eta $. Figure \ref{fig:sgd} shows that the conclusions of SGD are similar as GD, except that the conditions of entering EoS are different. 
\section{Other Related Work}
\label{sc:otherrel}
\begin{table*}[h]
    \centering
    \caption{\textbf{Comparison of trajectory based generalization bounds.} Only our proposed method can apply to the SGD with rich trajectory related information.}
    \resizebox{\linewidth}{!}{
    \begin{tabular}{l|c|c}
        \hline
         Method  &  Conditions & T.R.T\\
         \hline
        \citet{nikolakakis2022beyond} & Gradient Descent, $\eta_t\leq \frac{c}{t}\leq \frac{1}{\beta}$, $\beta$-smooth & $\sum_{t=1}^T \eta_t \frac{1}{n} \sum_{i=1}^n \| \nabla f(\mathbf{J_t},z_i)\|^2$ \\
        \citet{neu2021information} & $\beta$-smooth, $\mathbb{E}\left[\| \nabla f(\mathbf{w},z)- \nabla F_\mu(\mathbf{w}) \|\right]\leq v$,  $f(\cdot)$ is subguassian distribution& $\sqrt{T\eta^2} $\\        
        \citet{park2022generalization} & Weak Lipschitz continuity, Piecewise $
        \beta'$-smooth, $f(\cdot)$ is bounded, $\eta < \frac{2}{\beta}$ & $T$\\
        Ours & Small Learning Rate, $\Vert \nabla F_\mu(\mathbf{w}) \Vert \leq \gamma \Vert \nabla F_S(\mathbf{w}) \Vert$ & $\int _t d F_S(\mathbf{J_t})  \sqrt{1+\frac{\operatorname{Tr}(\Sigma(\mathbf{J_t}))}{\| \nabla F_S(\mathbf{J_t}) \|^2}} $\\
         \hline
    \end{tabular}}
    \label{tab:otherrel}
\end{table*}
This part compares the works that is not listed in Table \ref{tab:stability}. Table \ref{tab:otherrel} gives other trajectory based generalization bounds. \cite{nikolakakis2022beyond} is a stability based work designed mainly for generalization of gradient descent. It removes the Lipschitz assumption, and replaced by the term $\sum_{t=1}^T \eta_t \frac{1}{n} \sum_{i=1}^n \| \nabla f(\mathbf{J_t},z_i)\|^2$ in the generalization bounds. This helps enrich the trajectory information in the bounds. The limitation of this work is that it can only apply to the gradient descent and it is hard to extend to the stochastic gradient descent. \citet{neu2021information} adapt the information-theretical generalization bound to the stochastic gradient descent. The Theorem 1 in \citet{neu2021information} contains rich information about the learning trajectory, but most is about $\nabla F_\mu(\mathbf{w})$, which is unavailable for us. Therefore, we mainly consider the result of Corollary 2 in \citet{neu2021information}, which removes the term $\nabla F_\mu(\mathbf{w})$ by the assumption listed in Table \ref{tab:otherrel}. For this Collorary, the remained information within trajectory is merely the $\sqrt{T\eta^2}$. Althouth \citet{neu2021information} dosen't require the assumption of small learning rate, the bound contains the dimension of model, which is large for deep neural network. Compared with these work, our proposed method has advantage in that it can both reveal rich information about learning trajectory and applied to stochastic gradient descent.

\citet{chandramoorthy2022generalization} analyzes the generalization behavior based on statistical algorithmic stability. The proposed generalization bound can be applied into algorithms that don't converge. Let $S^{(i)}$ be the dataset obtained by replace $z_i$ in $S$ with another sample $z_i'$ draw from distribution $\mu$.The generalization bound relies on the stability measure $m \triangleq \sup\{\frac{1}{T}\sum_{t=0}^{T-1}f(\mathbf{J_t}|S,z) -\frac{1}{T}\sum_{t=0}^{T-1}f(\mathbf{J_t}|S^{(i)},z)| z\in \mathcal{Z}, i\in [n]\}$. We don't directly compare with this method because the calculation of $m$ relies on $S^{(i)}$ which contains sample outside of $S$. Therefore, we treat this result as intermediate results. More assumption is needed to remove this dependence of the information about the unseen samples, \ieno, the samples outside set $S$.

\section{Effect of Learning Rate and Stochastic Noise}
\label{subsec:lr}
In this part, we want to analyze how learning rate and the stochastic noise jointly affect our proposed generalization bound.
\tcr{Specifically,} we denote $p_t(\mathbf{w})$ as the distribution of the $\mathbf{J_t}$ during the training with multiple training \tcr{steps}. 
Following \tcr{the} work \citep{jastrzkebski2017three}, we consider the SDE function \tcr{as an approximation, which is shown as below:}
\begin{equation}
\label{eq:SDE}
\mathrm{d}\mathbf{w}=-\nabla F_S(\mathbf{w}) \mathrm{d}t+\sqrt{\eta} C^{\frac{1}{2}} \mathrm{d} \mathbf{W}(t).
\end{equation}
The SDE can be regarded as the continuous counterpart of Equation(\ref{eq:SGDupdate2}) when sets the distribution of noise term $\epsilon'$ in Equation(\ref{eq:SGDupdate2}) as Gaussian distribution. The influence of the noise $\epsilon$ on $p_t(\mathbf{w})$ is shown in the following theorem. 
\begin{theorem}
\label{thm:implicitSGD}
\tcr{When the updating} of the weight $w$ follows Equation (\ref{eq:SDE}), the covariance matrix $C$ is a hessian matrix of a function with \tcr{a} scalar output, then we have:
\begin{equation}
    \label{eq:noisegrad}
    \frac{\partial p_t(\mathbf{w})}{\partial t}=-\sum_{i=1}^d \frac{\partial }{\partial \mathbf{w_i}}[\nabla F_S(\mathbf{w})p_t(\mathbf{w})-\frac{\eta}{2}[\nabla \operatorname{Tr}(C(\mathbf{w}))+\underbrace{C(\mathbf{w})\nabla_w \log(p_t(\mathbf{w}))}_{\text{dampling factor}}]p_t(\mathbf{w})]    .
\end{equation}
\end{theorem}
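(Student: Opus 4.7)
The plan is to recognize the claim as the Fokker--Planck (Kolmogorov forward) equation associated with the SDE in equation~\ref{eq:SDE}, and then rewrite the divergence-form diffusion term so that the Hessian structure of $C$ collapses one of the second derivatives into a gradient, thereby exposing the advertised ``dampling'' drift. The SDE has drift $b=-\nabla F_S$ and noise coefficient $\sqrt{\eta}\,C^{1/2}$, so $\Sigma\Sigma^{\top}=\eta C$.

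First, I would derive the generic Fokker--Planck equation by applying It\^o's lemma to $f(\mathbf{w}_t)$ for a smooth compactly supported test function $f$, taking expectations so the martingale term vanishes, and integrating by parts so that all derivatives fall on $p_t$. Since this holds for every $f$, one recovers the pointwise identity
\begin{equation*}
\partial_t p_t(\mathbf{w}) \;=\; -\sum_i \partial_{w_i}\!\bigl[b_i(\mathbf{w})\,p_t(\mathbf{w})\bigr] \;+\; \tfrac{1}{2}\sum_{i,j}\partial_{w_i}\partial_{w_j}\!\bigl[(\Sigma\Sigma^{\top})_{ij}(\mathbf{w})\,p_t(\mathbf{w})\bigr].
\end{equation*}

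Next, I would reshape the diffusion term by pulling out one derivative,
$\tfrac{\eta}{2}\sum_{i,j}\partial_i\partial_j[C_{ij}\,p_t] = \tfrac{\eta}{2}\sum_i\partial_i\!\bigl[\sum_j \partial_j(C_{ij}\,p_t)\bigr]$, and expanding $\partial_j(C_{ij}\,p_t) = (\partial_j C_{ij})\,p_t + C_{ij}\,\partial_j p_t$. The Hessian hypothesis $C_{ij}=\partial_i\partial_j V$ makes the first piece collapse into a pure gradient: commuting partials, $\sum_j \partial_j C_{ij}=\partial_i \sum_j \partial_j^2 V = \partial_i \operatorname{Tr}(C)$, because $\operatorname{Tr}(C)=\sum_j C_{jj}=\sum_j \partial_j^2 V$. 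The second piece becomes $\sum_j C_{ij}\,\partial_j p_t = (C\nabla\log p_t)_i\,p_t$ via $\partial_j p_t = p_t\,\partial_j \log p_t$. Adding the two yields $\bigl[\nabla \operatorname{Tr}(C) + C\nabla\log p_t\bigr]_i p_t$ sitting inside the outer $\partial_i[\cdots]$.

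Finally, I would assemble the drift and the reshaped diffusion terms under a single divergence to arrive at the stated identity. The only nontrivial step is the commutation of partials above; this is also the only place the ``Hessian of a scalar'' assumption on $C$ is used, and it is the main obstacle, because without it $\sum_j \partial_j C_{ij}$ would not reduce to a gradient of $\operatorname{Tr}(C)$ and the dampling interpretation would fail.
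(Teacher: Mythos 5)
Your proposal is correct and takes essentially the same route as the paper: both pass to the Fokker--Planck equation for the SDE, pull one derivative out of the diffusion term, use the Hessian-of-a-scalar assumption to reduce $\sum_j \partial_j C_{ij}$ to $\partial_i \operatorname{Tr}(C)$, and rewrite $C\nabla p_t$ as $(C\nabla\log p_t)\,p_t$ before reassembling under a single divergence. The only difference is cosmetic: you derive the Fokker--Planck equation from It\^o's lemma and test functions, whereas the paper simply cites it.
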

\begin{remark}
\tcr{Previous studies} (\cite{zhu2018anisotropic,sagun2017empirical,jastrzkebski2017three}) \tcr{tell} that the covariance matrix $C$ is proximately equal to the hessian matrix of the loss function with respect to the parameters \tcb{of DNN}. \tcr{Thus,} the \tcr{above} condition that the covariance matrix $C$ is a hessian matrix of a function with scalar output is easy to be satisfied.
Formula (\ref{eq:noisegrad}) contains three parts.
The item $F_S(\mathbf{w})p_t(\mathbf{w})$ \tcb{enlarge the probability of parameters being located in the parameter} space with low $F_S(\mathbf{w})$. $\nabla \text{Tr}(C(\mathbf{w}))$ and $C(\mathbf{w})\nabla_w \log(p_t(\mathbf{w}))$ ususally contradict with each other. $\nabla \text{Tr}(C(\mathbf{w}))$ \tcb{enlarge the probability of parameters being located in the parameter space} with low $\text{Tr}(C(\mathbf{w}))$ value, while $C(\mathbf{w})\nabla_w \log(p_t(\mathbf{w}))$ serves as a damping factor to prevent the probability \tcr{from} concentrating on \tcr{a} small space. Therefore, setting larger learning rate gives stronger force for the weight to the area with lower $\text{Tr}(C(\mathbf{w}))$ values. According to Equation \ref{eq:var}, we also have a lower $\Sigma(\mathbf{w})$. As a result, large learning rate causes a small lower bound in Theorem \ref{thm:mainbd}
\end{remark}

\begin{proof}
Based on the condition \tcr{described above, we can infer that} $C(\mathbf{w})= \nabla \nabla G(\mathbf{w}) $, where G is a function with \tcr{a} scalar output.

\noindent \tcr{We first} prove that $
\nabla \cdot C(\mathbf{w})= \nabla \operatorname{Tr}(C(\mathbf{w}))
$ as below:
\begin{equation}
\begin{aligned}
   [\nabla \cdot C(\mathbf{w})]_j &= [\nabla \cdot \nabla \nabla G(\mathbf{w})]_j  \\ &= \sum_i \frac{\partial }{ \partial w_i} \frac{\partial }{\partial w_i } \frac{\partial }{ \partial w_j} G(\mathbf{w})\\
   &= \frac{\partial }{ \partial w_j} \sum_i \frac{\partial }{ \partial w_i} \frac{\partial }{\partial w_i }  G(\mathbf{w}) \\
   &= \frac{\partial }{ \partial w_j} \operatorname{Tr}(C(\mathbf{w})).
\end{aligned}       
\end{equation}
\tcr{So far, we can infer that} $
\nabla \cdot C = \nabla \operatorname{Tr}(C)
$.
According to Fokker-Planck equation( \citep{oksendal2013stochastic}), we have:
\begin{equation}
\begin{aligned}
\frac{\partial p_t(\mathbf{w})}{\partial t}&=-\sum_{i=1}^{d} \frac{\partial}{\partial \mathbf{w_i}}\left[\nabla F_D(\mathbf{w}) p_t(\mathbf{w})\right]+\frac{1}{2} \eta \sum_{i=1}^{d} \frac{\partial}{\partial \mathbf{w_i}}\left[\sum_{\mathrm{j}}^{\mathrm{d}} \frac{\partial}{\partial \mathbf{w_j}}\left[C(\mathbf{w}) p_t(\mathbf{w})\right]\right] \\
&=-\sum_{i=1}^{d} \frac{\partial}{\partial \mathbf{w_i}}\left[\nabla F_D(\mathbf{w}) p_t(\mathbf{w})\right]+\frac{1}{2} \eta \sum_{i=1}^{d} \frac{\partial}{\partial \mathbf{w_i}}\left[p_t(\mathbf{w}) \nabla \cdot C+p_t(\mathbf{w}) C \nabla_{w} \log p_t(\mathbf{w})\right] \\
&=-\sum_{i=1}^{d} \frac{\partial}{\partial \mathbf{w_i}}\left[\nabla F_D(\mathbf{w}) p_t(\mathbf{w})-\frac{1}{2} \eta\left[\nabla \cdot C(\mathbf{w})+C(\mathbf{w}) \nabla_{w} \log p_t(\mathbf{w})\right] p_t(\mathbf{w})\right] \\
&=-\sum_{i=1}^{d} \frac{\partial}{\partial \mathbf{w_i}}\left[\nabla F_D(\mathbf{w}) p_t(\mathbf{w})-\frac{1}{2} \eta\left[\nabla \operatorname{Tr}(C(\mathbf{w}))+C(\mathbf{w}) \nabla_{w} \log p_t(\mathbf{w})\right] p_t(\mathbf{w})\right].
\end{aligned}    
\end{equation}  
Therefore, the theorem is proven.
\end{proof}

\end{document}